\theoremstyle{plain}
\newtheorem{theo}{Theorem}
\newtheorem{prop}{Proposition}
\newtheorem{lemm}{Lemma}
\newtheorem{coro}{Corollary}
\newtheorem{assump}{Assumption}
\newtheorem{observation}{Observation}
\theoremstyle{definition}
\newtheorem{remark}{Remark}
\def\0{\bm{0}}
\def\1{\bm{1}}
\def\2{\bm{2}}
\def\3{\bm{3}}
\def\4{\bm{4}}
\def\5{\bm{5}}
\def\6{\bm{6}}
\def\7{\bm{7}}
\def\8{\bm{8}}
\def\9{\bm{9}}
\def\a{\bm{a}}
\def\b{\bm{b}}
\def\c{\bm{c}}
\def\e{\bm{e}}
\def\f{\bm{f}}
\def\k{\bm{k}}
\def\p{\bm{p}}
\def\q{\bm{q}}
\def\u{\bm{u}}
\def\v{\bm{v}}
\def\w{\bm{w}}
\def\x{\bm{x}}
\def\z{\bm{z}}
\def\A{\bm{A}}
\def\B{\bm{B}}
\def\D{\bm{D}}
\def\F{\bm{F}}
\def\G{\bm{G}}
\def\H{\bm{H}}
\def\I{\bm{I}}
\def\K{\bm{K}}
\def\L{\bm{L}}
\def\M{\bm{M}}
\def\N{\bm{N}}
\def\P{\bm{P}}
\def\Q{\bm{Q}}
\def\S{\bm{S}}
\def\U{\bm{U}}
\def\V{\bm{V}}
\def\W{\bm{W}}
\def\BC{\mathcal{B}}
\def\CC{\mathcal{C}}
\def\EC{\mathcal{E}}
\def\HC{\mathcal{H}}
\def\IC{\mathcal{I}}
\def\JC{\mathcal{J}}
\def\NC{\mathcal{N}}
\def\SC{\mathcal{S}}
\def\TC{\mathcal{T}}
\def\Pib{\mbox{\bm{$\Pi$}}}
\def\Real{\mbox{$\mathbb{R}$}}
\def\SymMat{\mbox{$\mathbb{S}$}}
\def\widebar{\accentset{{\cc@style\underline{\mskip10mu}}}}
\def\Widebar{\accentset{{\cc@style\underline{\mskip8mu}}}}
\def\wt{\widetilde}
\def\OriProb{\mbox{$\mathsf{P}$}}
\def\RelaxProb{\mbox{$\mathsf{Q}$}}
\def\RelaxProbTrans{\mbox{$\mathsf{Q}_{\circ}$}}
\def\MVEEProb{\mbox{$\mathsf{R}$}}
\title{
Spectral Clustering by Ellipsoid and Its Connection to 
Separable Nonnegative Matrix Factorization
}
\author{Tomohiko Mizutani 
\thanks{
Department of Industrial Engineering and Management, 
Tokyo Institute of Technology,
2-12-1-W9-69, Ookayama, Meguro, Tokyo, 152-8552, Japan. 
{\tt mizutani.t.ab@m.titech.ac.jp}}}
\date{March 4, 2015}
\begin{document}

\maketitle

\begin{abstract}
 This paper proposes a variant of the normalized cut algorithm for spectral clustering.
 Although the normalized cut algorithm applies the K-means algorithm to the eigenvectors of 
 a normalized graph Laplacian for finding clusters,
 our algorithm instead uses a minimum volume enclosing ellipsoid for them.
 We show that the algorithm shares similarity with
 the ellipsoidal rounding algorithm for separable nonnegative matrix factorization.
 Our theoretical insight implies that 
 the algorithm can serve as a bridge between spectral clustering and separable NMF.
 The K-means algorithm has the issues in that 
 the choice of initial points affects the construction of clusters
 and certain choices result in poor clustering performance.
 The normalized cut algorithm inherits these issues since K-means is incorporated in it,
 whereas the algorithm proposed here does not.
 An empirical study is presented to examine the performance of the algorithm.
\end{abstract}

\section{Introduction}

Clustering is a task of dividing a data set into groups on the basis of 
similarities between pairs of data points. 
The task is to find groups of data points 
such that similar data points are in the same group and dissimilar ones in different groups. 
Here, the groups found by an algorithm for the clustering task are referred to as clusters. 
Spectral clustering is a graph-based clustering, and 
the eigenvalues and eigenvectors of the graph Laplacian play a central role.

In spectral clustering, we construct a weighted graph 
to represent the similarities of data points.
The vertices correspond to data points, and the edges are associated with weights.
The weights are determined by a similarity function 
that quantifies the similarity of two data points;
it takes on a large positive value if the data points are similar,
while it gets close to a zero value if they are dissimilar.
Small weights are usually ignored when constructing the graph
since they may not make a major contribution to the configuration of clusters.
For each data point, we pick up some data points with high similarity with it, and
put edges with positive weights between them.
An input parameter $p$, called the neighbor number, 
determines how many data points are chosen.

In the weighted graph, 
the clustering task is to divide the vertex set into groups such that 
the total edge weight in the same groups is large, 
while those among different groups are small.
Shi and Malik in \cite{Shi00} introduced a normalized cut function to formulate this task.
The function  assigns a nonnegative real number to the groups of vertices, 
and it reaches its minimum when the task is completed.
The ideal goal is to find groups of vertices that minimize a normalized cut function.

However, finding the optimal solution 
of the normalized cut minimization problem is hard.
We instead solve a relaxation problem formed by dropping hard constraints.
This is an eigenvalue problem for a normalized graph Laplacian.
The eigenvalues and eigenvectors of the Laplacian 
contain clues for finding the optimal solution of the normalized cut minimization problem.
Thus, we attempt to find clusters by using them.
The normalized cut algorithm proposed 
by Shi and Malik in \cite{Shi00} and  Ng et al.\ in \cite{Ng02}
applies a K-means algorithm to the eigenvectors.
We will use the abbreviation NC to denote this algorithm for short.

Instead of K-means in NC,
we propose to use the minimum volume enclosing ellipsoid for the eigenvectors 
of normalized graph Laplacian.
The computation of such an ellipsoid can be formulated as a convex optimization 
problem, and there are efficient algorithms for solving it.
Our algorithm computes the enclosing ellipsoid and chooses some points 
lying on the boundary by using the 
successive projection algorithm (SPA) of \cite{Gil14}.
The points can be thought of as representatives of the clusters.
Hence, our algorithm assigns data points to the representative points 
on the base of their contribution.
The assignment can be formulated as a convex optimization problem.
Figure~\ref{Fig: Exp1} of Section~\ref{Subsec: Illustration} 
illustrates the algorithm.

We see in Theorem \ref{Theo: Relation of NCER and ER} that 
the algorithm has a similarity to the ellipsoidal rounding (ER) algorithm 
for separable nonnegative matrix factorization (NMF) in \cite{Miz14}
when the neighbor number $p$ is set to be equal to the number of data points,
in other words, when all weights are taken into account in constructing the graph.
Strictly speaking, 
the final outputs returned by these two algorithms do not coincide.
However, we see in Corollary \ref{Coro: Relation of NCER and MER} that 
the  outputs coincide if we modify one step of ER.
Accordingly, 
our algorithm can be thought of as an extension of ER.
A separable NMF is a special case of NMF, 
and it has applications in clustering and topic extraction of documents \cite{Aro12b, Aro13, Miz14}
and in endmember detection of hyperspectral images \cite{Gil13, Gil14}.
It is a matrix factorization problem, and basically differs 
in purpose from spectral clustering.
However, the theoretical insights shown here imply that 
our algorithm can serve as a bridge between 
spectral clustering and separable NMF through neighbor number $p$.

The K-means algorithm has the issues in that the choice of initial points
is sensitive to the way clusters are constructed;
some choices yield good clustering performance, while others do not.
It is difficult to choose good initial points before running the algorithm 
and the choice affects the cluster construction.
Hence, NC inherits the issues.
There have been many studies indicating that 
NMF based clustering within the block coordinate descent (BCD) framework 
has a good performance. However, the framework has similar issues as K-means.
Our algorithm does not have these issues,
since it consists of solving an eigenvalue problem and convex optimization problems 
and performing SPA.

We conducted experiments evaluating the performance of our algorithm
on real image data sets and compared its results with those of existing algorithms, including NC and NMF.
We set multiple initial points for the existing algorithms
and measured the worst, best, and average performance.
The experimental results showed that 
the performance of our algorithm is higher than the average performance 
of the existing algorithms in almost all cases.
We observed that there are initial points 
that result in the existing algorithms having poor performance, 
and as a result, their average performance gets worse.
We also conducted experiments to see 
how the performance our algorithm varies with the neighbor number $p$.

The rest of this paper is organized as follows.
After introducing the notation and symbols,
we review the NC algorithm in Section~\ref{Sec: Review of NC}.
Our algorithm is presented in Section~\ref{Sec: Proposed algorithm}, and 
its connection with the ER algorithm is shown in Section~\ref{Sec: Connection to separable NMF}.
We mention the issues of initial point choice in NC and ER in Section~\ref{Sec: Issues},
and review related work in order to discuss the relationship between spectral clustering and NMF 
in Section~\ref{Sec: Related work}.
An empirical study is reported in Section~\ref{Sec: Experiments}.
Finally, concluding remarks are given in Section~\ref{Sec: Concluding remarks}.

\subsection{Notation and Symbols}
We use $\Real^{d \times m}$ and $\Real^{d \times m}_+$ to denote
the set of $d$-by-$m$ real matrices and $d$-by-$m$ nonnegative matrices.
Here, a nonnegative matrix is a real matrix whose elements are all nonnegative.
We also use $\SymMat^m$ to denote
the set of $m$-by-$m$ real symmetric matrices.
Let $\A$ be a matrix of proper size.
The symbol $\A^\top$ represents its transpose.
The symbols $\mbox{tr}(\A)$ and $\mbox{rank}(\A)$ represent the trace and rank.
We use $\e$ and $\e_i$
to denote a vector of all ones and an $i$th unit vector.
We use $\I$ to denote an identity matrix.
The symbol $\mbox{diag}(a_1, \ldots, a_m)$ represents an $m$-by-$m$ 
diagonal matrix such that diagonal elements are $a_1, \ldots, a_m$.
Let $\A_1$ and $\A_2$ be a $d$-by-$m_1$  matrix and a $d$-by-$m_2$ matrix.
We use $(\A_1, \A_2)$ to denote
the horizontal concatenation of the two matrices 
and the matrix size is $d$-by-$(m_1 + m_2)$.
For a set $\SC$,
the symbol $|\SC|$ represents the number of elements,
and the symbol $\SC^c$ is the complementary set.

\section{Review of Normalized Cut Algorithm for Spectral Clustering} 
\label{Sec: Review of NC}
We denote $m$ data points by 
$d$-dimensional vectors $\a_1, \ldots, \a_m$, and its set by $\SC$.
Consider $r$ subsets of $\SC$.
If the subsets are disjoint and its union coincides with $\SC$, 
we call the subsets {\it disjoint partitions} of $\SC$.
In this paper, we consider clustering algorithms to return the disjoint partitions of $\SC$,
and call the disjoint partitions returned by them {\it clusters} of data points.
Spectral clustering is a graph-based clustering, 
and the algorithm is based on the eigenvalue decomposition of a graph Laplacian.
There are some types of algorithms proposed for spectral clustering.
In particular,
the NC algorithm by Shi and Malik of \cite{Shi00} and Ng et al.\ of \cite{Ng02}
is popular and often used.
For the details of algorithms and history in spectral clustering,
we refer the reader to the survey paper \cite{Lux07}.
Below, we review the NC algorithm.

In spectral clustering, 
we set a function $f$ on a weighted graph $G$, and formulate 
a clustering task as a problem of minimizing $f$.
A weighted graph is a graph such that each edge is associated with a weight.
Let $V$ and $E$ denote the sets of the vertices and edges.
The weight value is given by a function $k$ from $V \times V$ 
to a set of nonnegative real numbers.
An edge $e \in E$ links two vertices $v_i$  and $v_j$ 
if the value of $k$ at $e$ is positive; otherwise, it does not link.
For an edge $e_{ij}$ between two vertices $v_i$ and $v_j$, 
let $k_{ij}$ denote the value of $k$ at $e_{ij}$.
Consider a subset $\SC$ of vertex set $V$.
We use the notation $\mbox{cut}(\SC, \SC^c)$ to 
denote the total weight of all edges between $\SC$ and its complement $\SC^c$.
Namely,
\begin{equation*}
 \mbox{cut}(\SC, \SC^c) = \frac{1}{2} \sum_{i \in \SC, j \in \SC^c} k_{ij}.
\end{equation*} 
In the same manner to \cite{Lux07}, 
hereinafter, we use a shorthand notation $i \in \SC$.
This notation represents the indices $i$ of vertices $v_i$ in $\SC$.
A {\it degree} of vertex $v_i$ is the total weight of all edges connected to $v_i$, 
and we denote it by $d_i$.
We use the notation $\mbox{vol}(\SC)$ 
to denote the total degree of all vertices in $\SC$.
Namely,
\begin{equation*}
 \mbox{vol}(\SC) = \sum_{i \in \SC} d_{i}
\end{equation*}
where $d_i$ is a degree of vertex $v_i$ in $\SC$ and 
it is given as $d_i = \sum_{j=1}^{m} k_{ij}$ 
for the weights $k_{ij}$ on edges $e_{ij}$.
The $\mbox{vol}(\SC)$ can be regarded as the size of $\SC$.

The NC algorithm consists of three major steps.
The first step constructs a weighted graph $G$.
A vertex $v_i \in V$ is in one-to-one correspondence with 
a data point $\a_i \in \SC$.
Let $k$ be a function that quantifies the similarity of two data points.
The function assigns a nonnegative real number 
as the similarity of data points $\a_i$ and $\a_j$;
it takes on a large positive value if the data points are similar,
while it gets close to a zero value if they are dissimilar.
In the context of spectral clustering, the function is referred to as a {\it similarity function}.
A polynomial function and a Gaussian function are popular and often used as the function.
In particular, this paper deals with the former function, and its form is 
\begin{equation} \label{Eq: Poly}
 k(\a_i, \a_j) = (\a_i^\top \a_j + b)^c
\end{equation}
where $b$ is a nonnegative real number 
and $c$ is a positive integer number.
These are parameters given in advance.

Small weight values in $G$ may not make a major contribution to 
the configuration of clusters, and thus, are usually ignored.
A $p$-nearest neighbor set is used for this purpose.
For a data point $\a_i$, 
we choose the top $p$ data points with high similarity to $\a_i$
as measured by a similarity function $k$, and construct a set 
by collecting these points.
Let $\NC_p(\a_i)$ denote the set.
The integer number $p$ used for the construction is
referred to as a {\it neighbor number}.
The weight value $k_{ij}$ of $G$ is given as 
\begin{equation*}
 k_{ij} = \left\{
 \begin{array}{ll}
  k(\a_i, \a_j), & \mbox{if} \ \a_i \in \NC_p(\a_j) \ \mbox{or} \ \a_j \in \NC_p(\a_i), \\
  0, & \mbox{otherwise}.
 \end{array}
\right.
\end{equation*}
We denote the $m$-by-$m$ symmetric matrix consisting of $k_{ij}$ by $\K$. 
The matrix $\K$ is called a {\it weighted adjacency matrix} of  $G$, 
and in this paper, it is referred to as an {\it adjacency matrix} for short.

The next step finds clusters from the graph $G$ built in the first step. 
A graph Laplacian is a matrix so as to possess some properties of $G$.
To see the form of this matrix, we need to introduce a degree matrix.
A {\it degree matrix} of $G$ is a diagonal matrix
whose diagonal element is the degree of each vertex.
Namely, the $(i,i)$th element is given as $d_i = \sum_{j=1}^{m} k_{ij}$ 
where $k_{ij}$ is an element of the adjacency matrix $\K$ of $G$. 
We denote the $m$-by-$m$ diagonal matrix by $\D$. 
A degree matrix is usually assumed to be nonsingular.
The singularity means that 
some data point is isolated and is completely dissimilar to all the other data points. 
Thus, such a data point should be removed.
A {\it graph Laplacian} of $G$ is a matrix given as $\D-\K$ 
for a degree matrix $\D$ and an adjacency matrix $\K$.
This is an $m$-by-$m$ symmetric matrix, and we denote it by $\L$.

We set a function $f$ to find the clusters from $G$.
Although some choices are possible, we consider the function 
for the disjoint partitions $\SC_1, \ldots, \SC_r$ of $\SC$,
\begin{equation} \label{Eq: Normized Cut Function}
 f(\SC_1, \ldots, \SC_r) 
  = \sum_{i=1}^r \frac{\mbox{cut}(\SC_i, \SC_i^c)}{\mbox{vol}(\SC_i)}.
\end{equation}
This function is called a {\it normalized cut}.
Other types of functions have been proposed in this context. 
For instance, a ratio cut function \cite{Hag92} is as popular as the normalized cut one.
The $\mbox{cut}(\SC_i, \SC_i^c)$ takes a small value
if the data points in $\SC$ and $\SC^c$ are dissimilar.
The $\mbox{vol}(\SC_i)$ takes a large value if so is the size of $\SC$.
Therefore, the minimization of the normalized cut function 
is to find the clusters such that 
the data points in different clusters are dissimilar and the cluster sizes are large.

The ideal goal is to find $r$ disjoint partitions of $\SC$ minimizing the normalized cut function.
To be precise, 
\begin{quote}
 {\bf (Spectral clustering by the normalized cut function)} \
 Suppose that we are given a data set $\SC$ and an integer number $r$.
 Find the disjoint partitions 
 $\SC_1, \ldots, \SC_r$ of $\SC$ to minimize the normalized cut function $f$
 of (\ref{Eq: Normized Cut Function}).
\end{quote}
The optimal solution of the minimization problem provides the best clusters in the sense that
it attains the minimum of $f$.
But, this is a hard combinatorial problem, and thus we consider its relaxation problem.
By using a graph Laplacian $\L$,
we rewrite $f$ as
\begin{equation*}
 f = \mbox{tr}(\H^\top \L \H).
\end{equation*}
Here,  $\H$ is an $m$-by-$r$ matrix determined by $\SC_1, \ldots, \SC_r$, 
and its element $h_{ij}$ is 
\begin{equation}
 \label{Eq: Element of H in ncut}
 h_{ij} = 
  \left\{
  \begin{array}{ll}
   1  /  \sqrt{\mbox{vol}(\SC_j)}, & \a_i \in \SC_j, \\
   0,                         & \mbox{otherwise}.
  \end{array}
  \right.
\end{equation}
Therefore, the minimization problem of $f$ is equivalent to the problem,
\begin{equation*}
 \OriProb :
   \begin{array}{cccc}
    \mbox{minimize} & \mbox{tr}(\H^\top \L \H)  & \mbox{subject to} 
     &  \H \ \mbox{satisfies} \ (\ref{Eq: Element of H in ncut}).
   \end{array}
\end{equation*}
The hardness of this problem is in the constraint for $\H$. 
Hence, we drop the hard constraint.
Instead, we take into account that $\H$ satisfies the relation $\H^\top \D \H=\I$ 
and add it as the constraint.
Namely, we consider the problem,
\begin{equation*}
 \RelaxProb :
  \begin{array}{cccc}
   \mbox{minimize} & \mbox{tr}(\H^\top \L \H)  & \mbox{subject to}  &  \H^\top\D\H = \I.
  \end{array}
\end{equation*}
The problem $\RelaxProb$ serves as the relaxation problem of $\OriProb$.
It can be solved through eigenvalue decomposition.
By introducing a new matrix variable $\G \in \Real^{m \times r}$ such that $\G = \D^{1/2}\H$,
we transform the problem into an equivalent one,
\begin{equation*}
 \RelaxProbTrans :
  \begin{array}{cccc}
   \mbox{minimize} & \mbox{tr}(\G^\top \D^{-1/2}  \L \D^{-1/2} \G)  & \mbox{subject to}  &  \G^\top \G = \I.
  \end{array}
\end{equation*}
In this transformation, we use the assumption that 
a degree matrix $\D$ is nonsingular, and in other words, 
the diagonal elements are all positive.
The optimal value and solution of $\RelaxProbTrans$ is obtained 
from the eigenvalue decomposition of normalized graph Laplacian $\D^{-1/2}  \L \D^{-1/2}$. 
We arrange the eigenvalues in ascending order, and 
let $\lambda_1, \ldots, \lambda_m$ denote the values.
Namely, the relation $\lambda_1 \le \cdots \le \lambda_m$ holds.
Also, let $\v_i$ denotes the eigenvector corresponding to the eigenvalue $\lambda_i$
for $i = 1, \ldots, m$.
We easily see that the optimal value and optimal solution of $\RelaxProbTrans$ 
are given as $\lambda$ and $\V_r$ such that $\lambda = \lambda_1 + \cdots + \lambda_r$ 
and $\V_r = (\v_1, \ldots, \v_r)$. 
Therefore, those of $\RelaxProb$ are respectively $\lambda$ and $\D^{-1/2}\V_r$.

The final step constructs clusters from the optimal solution $\D^{-1/2}\V_r$ 
of relaxation problem $\RelaxProb$. 
We search for the optimal solution $\H$ of original problem $\OriProb$
based on clues provided by  that of $\RelaxProb$.
Let us see the matrix $\H$ of $(\ref{Eq: Element of H in ncut})$ in detail.
We take the transpose of $\H$, and denote 
the column vectors of $\H^\top$ by $\f_1, \ldots, \f_m \in \Real^r$.
The vector $\f_i$ can be regarded as an indicator to 
tell us which cluster a data point belongs to.
The elements of $\f_i$ are all zero except one element, and 
the position of nonzero element indicates the cluster index 
to which a data point belongs.
The convex hull of $\f_1, \ldots, \f_m$
is an $(r-1)$-dimensional simplex in $\Real^r$.
Among $\f_1, \ldots, \f_m$, there are $r$ different types of vectors and 
those different ones correspond to the $r$ vertices.

Let us consider the situation in which the optimal solution $\D^{-1/2}\V_r$ 
of relaxation problem $\RelaxProb$ is close to the optimal solution $\H$ 
of original one $\OriProb$.
In the same way as $\f_i$, 
we take the transpose of $\D^{-1/2}\V_r$ and denote the column vectors of $\V_r^{\top}\D^{1/2}$
by $\p_1, \ldots, \p_m \in \Real^r$.
Under this situation, we can have an expectation that 
the convex hull of $\p_1, \ldots, \p_m$ is similar to the shape 
of an $(r-1)$-dimensional simplex,
and $\p_1, \ldots, \p_m$ are located around each vertex of the simplex.
Thus, these vectors should form $r$ clusters.
Accordingly, the clusters can be found by applying a clustering algorithm
such as K-means to $\p_1, \ldots, \p_m$.
Algorithm~\ref{Alg: NC} describes each step of the algorithm by Shi and Malik in \cite{Shi00}.

\begin{algorithm}
 \caption{NC \cite{Shi00}}
 \label{Alg: NC}
 \textbf{Input:}  
 A data set $\SC = \{\a_1, \ldots, \a_m\}$, a cluster number $r$,  a neighbor number $p$, 
 and a similarity function $k$. \\
 \textbf{Output:} Clusters $\SC_1, \ldots, \SC_r$.
 \begin{enumerate}
  \item[\textbf{1:}] 
	       Construct an adjacency matrix $\K \in \SymMat^m$ by using 
	       a similarity function $k$ and a $p$-nearest neighbor set.

  \item[\textbf{2:}] 
	       Let $\L$ and $\D$ be a graph Laplacian and 
	       a degree matrix obtained from $\K$.
	       Compute the $r$ eigenvectors $\v_1, \ldots, \v_r \in \Real^m$ 
	       of the normalized graph Laplacian $\D^{-1/2} \L \D^{-1/2} \in \SymMat^m$
	       in correspondence with the $r$ smallest eigenvalues.

  \item[\textbf{3:}] 
	       Form a matrix $\V_r = (\v_1, \ldots, \v_r)$, and 
	       let $\p_1, \ldots, \p_m \in \Real^r$ be 
	       the columns of $\V_r^{\top}\D^{-1/2}$.

  \item[\textbf{4:}] 
	       Apply the K-means algorithm to 
	       $\p_1, \ldots, \p_m$, and find 
	       $r$ clusters $\SC_1, \ldots, \SC_r$.
 \end{enumerate}
\end{algorithm}

\begin{remark} \label{Remark:Alg-by-Ng}
 The algorithm by Ng et al.\ of \cite{Ng02} coincides with that of Shi and Malik of \cite{Shi00}
 except for $\D^{-1/2}$ in the step 3 of Algorithm~\ref{Alg: NC}.
 Instead of the matrix, it uses a diagonal matrix $\S$ that 
 scales each column of $\V^\top$ to have a unit $\ell_2$ norm.
\end{remark}

\section{Proposed Algorithm}
\label{Sec: Proposed algorithm}
In this section, we will describe the proposed algorithm.
The algorithm is a variant of the NC algorithm.
Although NC uses K-means for the construction of clusters in the final step,
it instead uses a minimum volume enclosing ellipsoid (MVEE).

\subsection{Observation for the Optimal Solution of $\RelaxProb$}
\label{Subsec: Observation}
Our algorithm is built on the following observation. 
\begin{observation} \label{Obs: Dist of eigenvec}
 Let $\p_1, \ldots, \p_m \in \Real^r$ denote the column vectors 
 of $\V_r^{\top} \D^{-1/2} \in \Real^{r \times m}$.
 The convex hull of $\p_1, \ldots, \p_m$ has 
 a similar shape to an $(r-1)$-dimensional simplex in $\Real^r$, 
 and $\p_1, \ldots, \p_m$ are around the $r$ vertices.
 Therefore, these vectors form $r$ clusters in the neighborhood of the vertices.
\end{observation}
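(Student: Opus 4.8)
The plan is to prove the statement exactly in an idealized setting and then argue that it survives a small perturbation. In the idealized setting, suppose that the similarity graph $G$ has exactly $r$ connected components and that these components are precisely the target clusters $\SC_1,\dots,\SC_r$. After permuting the vertices, $\L=\D-\K$ is block diagonal with one block per component, each block being the Laplacian of a connected graph; hence $\mbox{rank}(\L)=m-r$ and the null space of $\L$ is spanned by the indicator vectors of $\SC_1,\dots,\SC_r$. The columns of the matrix $\H$ of $(\ref{Eq: Element of H in ncut})$ are exactly those indicators rescaled by positive constants, so $\L\H=\0$ and $\mbox{tr}(\H^\top\L\H)=0$; since $\L\succeq\0$, the value $0$ is the minimum of $\RelaxProb$, so $\H$ is optimal for $\RelaxProb$ and $\D^{1/2}\H$ is optimal for $\RelaxProbTrans$.

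Next I would use that, when the eigengap condition $\lambda_r<\lambda_{r+1}$ holds, the minimizers of $\RelaxProbTrans$ are precisely the orthonormal bases of the invariant subspace of $\D^{-1/2}\L\D^{-1/2}$ for its $r$ smallest eigenvalues, so any two minimizers differ by right multiplication by an orthogonal $r\times r$ matrix. Comparing the minimizers $\V_r$ and $\D^{1/2}\H$ gives $\V_r=\D^{1/2}\H\,\Q$ for some orthogonal $\Q$, whence $\V_r^\top\D^{-1/2}=\Q^\top\H^\top$ and therefore $\p_i=\Q^\top\f_i$ for every $i$. Since $\f_1,\dots,\f_m$ take only the $r$ distinct values $\mbox{vol}(\SC_j)^{-1/2}\e_j$ and their convex hull is the $(r-1)$-simplex with these vertices, $\{\p_1,\dots,\p_m\}$ is the rigid rotation by $\Q^\top$ of that configuration: its convex hull is a congruent $(r-1)$-simplex, each $\p_i$ sits exactly at the vertex attached to the cluster containing $\a_i$, and $\p_i=\p_j$ iff $\a_i$ and $\a_j$ lie in the same cluster. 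This is the exact, degenerate form of the observation, and it makes precise the informal discussion preceding problem $\OriProb$.

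For a graph that is not perfectly clustered, I would write $\D^{-1/2}\L\D^{-1/2}=\M_0+\Delta$, where $\M_0$ is the ideal normalized Laplacian above and $\Delta$ collects the contributions of the between-cluster edges; $\Delta$ is small exactly when $p$ and the similarity function are chosen so that cross-cluster weights are negligible. Letting $\gamma:=\lambda_{r+1}(\M_0)>0$ denote the gap above the $r$-fold zero eigenvalue of $\M_0$ (a quantity governed by how well connected each cluster is internally), the Davis--Kahan $\sin\Theta$ theorem bounds the principal angles between $\mbox{range}(\V_r)$ and the ideal invariant subspace by $O(\|\Delta\|/\gamma)$, so there is an orthogonal $\Q$ with $\|\D^{-1/2}\V_r-\H\Q\|_F=O(\|\D^{-1/2}\|\,\|\Delta\|/\gamma)$ and hence $\sum_{i=1}^m\|\p_i-\Q^\top\f_i\|^2$ of the same order. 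Each $\p_i$ then lies near one of the $r$ rotated vertices $\mbox{vol}(\SC_j)^{-1/2}\Q^\top\e_j$; since the squared distance between the vertices of $\SC_j$ and $\SC_k$ equals $\mbox{vol}(\SC_j)^{-1}+\mbox{vol}(\SC_k)^{-1}$, the $\p_i$ break into $r$ tight groups around these well-separated points, which is the asserted clustering.

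The main obstacle is that ``similar shape'' and ``around the vertices'' are qualitative, so a fully rigorous statement requires committing to explicit hypotheses that bound $\|\Delta\|$ from above and $\gamma$ from below, after which one must carry the constants through the Davis--Kahan estimate and, above all, control the rotational freedom $\Q$ inside the $r$-dimensional eigenspace and rule out the near-degenerate regime $\lambda_r\approx\lambda_{r+1}$, in which the invariant subspace --- and hence the locations of the $\p_i$ --- need not be stable. I would therefore present the identity $\p_i=\Q^\top\f_i$ of the ideal case as the precise content of the observation and invoke the $\sin\Theta$ bound, in the spirit of \cite{Lux07}, to justify the qualitative conclusion in the near-ideal regime.
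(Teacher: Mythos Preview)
Your argument is mathematically sound and follows the standard rigorous justification of spectral clustering: the exact identity $\p_i=\Q^\top\f_i$ in the block-diagonal ideal case, followed by a Davis--Kahan perturbation bound. However, the paper does \emph{not} prove Observation~\ref{Obs: Dist of eigenvec} at all. It is deliberately labeled an \emph{observation}, not a proposition, and serves only as heuristic motivation for the algorithm. The paper's entire justification consists of the single sentence ``this observation should hold if $\D^{-1/2}\V_r$ is close to the optimal solution $\H$ of $\OriProb$,'' together with Proposition~\ref{Prop: Vecotrs-lie-on-hyperplane}, which establishes only the much weaker fact that the $\p_i$ lie on a common hyperplane (so that their convex hull is at most $(r-1)$-dimensional). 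No ideal-case computation, no perturbation analysis, and no Davis--Kahan bound appear anywhere in the paper.

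Your approach therefore supplies content the paper deliberately omits. What you gain is a quantitative statement with explicit hypotheses (small $\|\Delta\|$ relative to the gap $\gamma$), and your own final paragraph already identifies the price: one must commit to assumptions the paper never states, handle the orthogonal ambiguity $\Q$, and assume an eigengap $\lambda_r<\lambda_{r+1}$. The paper sidesteps all of this by treating the simplex picture as an empirical phenomenon to be checked experimentally (Section~\ref{Subsec: Illustration}, Figure~\ref{Fig: Exp1}) rather than as a theorem, and by designing NCER so that it behaves sensibly whenever the picture approximately holds, without proving when that is.
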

As mentioned in the last of Section \ref{Sec: Review of NC},
this observation should hold if 
$\D^{-1/2}\V_r$ is close to the optimal solution $\H$ of $\OriProb$.
We see in Proposition \ref{Prop: Vecotrs-lie-on-hyperplane} 
that the $\p_1, \ldots, \p_m \in \Real^r$ lie on a hyperplane, and 
thus, its convex hull is at least an $(r-1)$-dimensional polytope.
In the following description, we use the same notation in Section \ref{Sec: Review of NC};
$\v_1, \ldots, \v_r \in \Real^m$ are the eigenvectors of 
normalized graph Laplacian $\D^{-1/2}\L\D^{-1/2} \in \SymMat^m$,
and those correspond to each of the eigenvalues $\sigma_1, \ldots, \sigma_r$
with $\sigma_1 \le \cdots \le \sigma_r$;
$\p_1, \ldots, \p_m \in \Real^r$ are the column vectors 
of $\V_r^{\top}\D^{-1/2} \in \Real^{r \times m}$ 
with $\V_r = (\v_1, \ldots, \v_r)$ given by the $\v_1, \ldots, \v_r$.

\begin{prop} \label{Prop: Vecotrs-lie-on-hyperplane}
 $\v_1$ can be chosen as $\tau\D^{1/2}\e$ with some nonzero real number $\tau$.
 Thus, if we choose $\v_1$ as $\tau \D^{1/2}\e$,
 then, $\p_1, \ldots, \p_m$ lie on a hyperplane $\HC = \{\x \in \Real^r : \e_{1}^\top \x = \tau \}$.
\end{prop}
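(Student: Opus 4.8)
The plan is to exploit the standard fact that the all-ones vector lies in the kernel of the (unnormalized) graph Laplacian, transfer this to the normalized Laplacian $\D^{-1/2}\L\D^{-1/2}$, and then read off the first coordinate of each $\p_j$ directly from $\v_1$.

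First I would show that $\D^{1/2}\e$ is an eigenvector of $\D^{-1/2}\L\D^{-1/2}$ for the eigenvalue $0$. Since the $i$th diagonal entry of $\D$ is $d_i=\sum_{j=1}^m k_{ij}$, both $\D\e$ and $\K\e$ equal the vector of vertex degrees, so $\L\e=(\D-\K)\e=\0$ and hence
\begin{equation*}
 \D^{-1/2}\L\D^{-1/2}(\D^{1/2}\e)=\D^{-1/2}\L\e=\0 .
\end{equation*}
Next I would check that $0$ is in fact the \emph{smallest} eigenvalue. The Laplacian is positive semidefinite because $\x^\top\L\x=\tfrac12\sum_{i,j}k_{ij}(x_i-x_j)^2\ge 0$ for every $\x\in\Real^m$ (all $k_{ij}\ge 0$), and writing $\y=\D^{-1/2}\x$ shows that $\D^{-1/2}\L\D^{-1/2}$ is positive semidefinite as well; thus all of its eigenvalues are nonnegative. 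Because the $\sigma_i$ are listed in ascending order, we may set $\sigma_1=0$ and take the corresponding eigenvector to be $\v_1=\tau\D^{1/2}\e$, where $\tau$ can be any nonzero scalar (an eigenvector being determined only up to scaling); the natural normalizing choice is $\tau=1/\|\D^{1/2}\e\|=1/\sqrt{\mbox{vol}(\SC)}$, which is nonzero since $\D$ is nonsingular.

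For the hyperplane claim I would compute the first coordinate of each column $\p_j$ of $\V_r^\top\D^{-1/2}$. The $(i,j)$ entry of $\V_r^\top\D^{-1/2}$ is $(\v_i)_j\,d_j^{-1/2}$, so
\begin{equation*}
 \e_1^\top\p_j=(\v_1)_j\,d_j^{-1/2}.
\end{equation*}
With $\v_1=\tau\D^{1/2}\e$ one has $(\v_1)_j=\tau\sqrt{d_j}$, hence $\e_1^\top\p_j=\tau\sqrt{d_j}\cdot d_j^{-1/2}=\tau$ for every $j=1,\dots,m$. Therefore all of $\p_1,\dots,\p_m$ lie on the hyperplane $\HC=\{\x\in\Real^r:\e_1^\top\x=\tau\}$.

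I do not expect a genuine obstacle here; the one point worth a sentence of care is the eigenvalue ordering. If the graph $G$ is disconnected, $0$ is a repeated eigenvalue of the normalized Laplacian, but this is harmless: we only need that $\tau\D^{1/2}\e$ \emph{can be chosen} as an eigenvector for $\sigma_1=0$, and the coordinate computation above is unaffected by any other basis choice within that eigenspace.
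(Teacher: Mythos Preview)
Your proof is correct and follows essentially the same approach as the paper: both argue that $\L$ is positive semidefinite (hence so is $\D^{-1/2}\L\D^{-1/2}$), that $\L\e=\0$, and therefore $\tau\D^{1/2}\e$ lies in the eigenspace of the smallest eigenvalue $0$. The paper simply states that the hyperplane claim follows from the first part, whereas you write out the coordinate computation explicitly; otherwise the arguments coincide.
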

\begin{proof}
 It suffices to show the first part of this statement that 
 $\v_1$ can be chosen as $\tau\D^{1/2}\e$.
 Let $\widebar{\L}$ denote 
 a normalized graph Laplacian $\D^{-1/2}\L\D^{-1/2}$.
 Since a graph Laplacian $\L$ is positive semidefinite, so is $\widebar{\L}$.
 Also, we have $\L\e = \0$.
 Thus, $\widebar{\L}$ has a zero eigenvalue as the smallest one, 
 and the corresponding eigenspace contains a vector $\tau\D^{1/2}\e$ 
 with some nonzero real number $\tau$.
\end{proof}

Regarding the first part of the proposition,
we can find the same statement in Proposition 3 of \cite{Lux07}.

\begin{remark}
 Although Proposition \ref{Prop: Vecotrs-lie-on-hyperplane} requires us to
 choose the vector $\tau \D^{1/2}\e$ from the eigenspace 
 associated with the smallest eigenvalue,
 eigenvalue solvers such as {\tt eig} and {\tt eigs} commands on MATLAB
 do not always output the vector.
 But, we, of course, can reconstruct it from the eigenvectors provided by the solvers.
 Let $\EC$ denote the eigenspace in $\Real^m$ associated with the smallest eigenvalue.
 Suppose that the dimension of $\EC$ is $s$. 
 Since a normalized graph Laplacian is a symmetric matrix,
 we can take $s$ orthonormal basis vectors for $\EC$.
 Let $\v_1, \ldots, \v_s \in \Real^m$ denote the $s$ orthonormal basis vectors.
 Then, there exists a nonzero  vector $\p \in \Real^s$ 
 such that $\tau \D^{1/2}\e = \V_s\p$ and $||\p||_2=1$ 
 where $\V_s = (\v_1, \ldots, \v_s) \in \Real^{m \times s}$ since $\tau \D^{1/2}\e \in \EC$. 
 We pick up $s-1$ orthonormal basis vectors $\p_1, \ldots, \p_{s-1} \in \Real^s$
 for the orthogonal complement to $\p$. 
 Let $\P = (\p, \p_1, \ldots, \p_{s-1}) \in \Real^{s \times s}$.
 We construct $\V_s\P \in \Real^{m \times s}$.
 Since $\P$ is an orthogonal matrix, the $s$ column vectors of $\V_s \P$ are 
 the orthonormal basis vectors spanning $\EC$, and 
 the first column vector is $\tau \D^{1/2}\e$. 
\end{remark}

\begin{remark}
 The matrix $\D^{-1/2}$ in the step 3 of Algorithm~\ref{Alg: NC}
 can be thought of as a scaling matrix that scales the column vectors of $\V_r^\top$ 
 to lie on a hyperplane.
 As mentioned in Remark~\ref{Remark:Alg-by-Ng}, instead of $\D^{-1/2}$,
 the algorithm by Ng et al.\ of \cite{Ng02} uses a diagonal matrix $\S$
 that scales those of $\V_r^\top$ to lie on a unit cube.
\end{remark}

\subsection{MVEE for the Points in Simplex} \label{Subsec: MVEE for the points in Simplex}
Observation~\ref{Obs: Dist of eigenvec} implies that 
$\p_1, \ldots, \p_m \in \Real^r$ form $r$ clusters in the neighborhood of the vertices of 
an $(r-1)$-dimensional simplex in $\Real^r$.
Thus, if we can find $r$ vectors from $\p_1, \ldots, \p_m$ that 
are close to the vertices of the simplex,
those vectors should serve as the representative points of clusters.
An MVEE for $\p_1, \ldots, \p_m$ can be used for finding such vectors. 
This is because it can touch the near-vertices if 
the convex hull of $\p_1, \ldots, \p_m$ is similar to the shape of simplex.
Originally, this geometric property has been shown in \cite{Gil13, Miz14} 
in order to design algorithms for a separable NMF problem.

In this section, we first see the formulation of MVEE, 
and then recall the precise description of the property 
in Propositions \ref{Prop: Number of touching point} and \ref{Prop: Vertices found by MVEE}.
Let $\L$ be an $r$-by-$r$ positive definite matrix, and $\z$ be an $r$-dimensional vector.
A full-dimensional ellipsoid in $\Real^r$ is defined as a set 
$\EC = \{ \x \in \Real^r : (\x - \z)^{\top}\L (\x - \z) \le 1\}$.
The vector $\z$ serves as the center of ellipsoid $\EC$.
In particular, an ellipsoid is referred to as an origin-centered ellipsoid 
if the center $\z$ matches the origin (in other words, $\z$ is a zero vector).
It is known that the volume of unit ball in $\Real^r$ only depends on the dimension $r$,
and we denote it by $c(r)$.
The volume of $\EC$ is given as $c(r) / \sqrt{\det \L}$.
Let $\BC$ denote the boundary of $\EC$ such that 
$\BC = \{ \x \in \Real^r : (\x - \z)^{\top}\L (\x - \z) = 1\}$.
A vector $\x$ is called as an {\it active point} of $\EC$
if it lies on the boundary such that $\x \in \BC$.

Let $\p_1, \ldots, \p_m$ be $m$ points in $\Real^r$.
We put the following assumption on the points.

\begin{assump} \label{Assump: Full row rank}
$\mbox{rank}(\P) = r$ for $\P = (\p_1, \ldots, \p_m) \in \Real^{r \times m}$.
\end{assump}

We consider an origin-centered ellipsoid to enclose the convex hull of
a set $\SC = \{\pm \p_1, \ldots, \pm \p_m\}$, and 
describe several properties of the enclosing ellipsoid with minimum volume.
Assumption \ref{Assump: Full row rank} ensures that the convex hull of $\SC$ 
is full-dimensional in $\Real^r$, and thus, the enclosing ellipsoid has a positive volume.
The volume minimization of enclosing ellipsoid for the convex hull of $\SC$ 
is formulated as the problem,
\begin{equation*} 
 \begin{array}{lll}
  \MVEEProb(\SC): & \mbox{minimize}    & -\log \det \L, \\
                  & \mbox{subject to}  & \langle  \p \p^{\top}, \L \rangle \le 1
		   \ \mbox{for all} \ \p \in \SC, \\
                  &                    & \L \succ \0.
 \end{array}
\end{equation*}
The $r$-by-$r$ matrix $\L$ is the decision matrix variable.
The symbol $\succ$ for a matrix represents that it is positive definite.
In the problem $\MVEEProb(\SC)$, we need to put Assumption \ref{Assump: Full row rank} 
to ensure the existence of optimal solution.
Let $\L^*$ denote the optimal solution.
The set $\{\x \in \Real^r : \x^\top \L^* \x \le 1 \}$ 
is an origin-centered enclosing ellipsoid with minimum volume for the convex hull of $\SC$. 
This paper refers to it as an {\it origin-centered MVEE} for $\SC$. 
The active points of the origin-centered MVEE
are the points $\p_i$ satisfying $\p_i^\top \L^* \p_i = 1$.
The problem $\MVEEProb(\SC)$ is a convex optimization problem, 
and there are polynomial-time algorithms for solving it.
The problem and algorithms for computing an MVEE  have been well studied, and 
we refer the reader to, for instance, \cite{Boy04} for the details.

We describe the propositions mentioned in the first of this section.

\begin{prop}[Lemma 11 of \cite{Miz14}] \label{Prop: Number of touching point}
 Suppose that $\p_1, \ldots, \p_m \in \Real^r$ satisfy Assumption \ref{Assump: Full row rank}.
 Then, the origin-centered MVEE for $\SC = \{ \pm \p_1, \cdots, \pm \p_m\}$ touches at least  $r$ points 
 with plus-minus signs among $\p_1, \ldots, \p_m$.
\end{prop}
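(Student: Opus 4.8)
The plan is to invoke the first-order optimality conditions of the minimum volume enclosing ellipsoid in their ``no improving perturbation'' form, as in the proof of John's theorem. Write $\SC = \{\pm\p_1,\ldots,\pm\p_m\}$ and let $\L^* \succ \0$ denote the optimal solution of $\MVEEProb(\SC)$, which exists under Assumption~\ref{Assump: Full row rank}. Call an index $i$ \emph{active} when $\p_i^\top\L^*\p_i = 1$, and let $J$ be the set of active indices; the assertion is precisely that $|J| \ge r$, since then the origin-centered MVEE touches $\p_i$ and $-\p_i$ for at least the $r$ indices $i \in J$. I would argue by contradiction, assuming that $W := \mbox{span}\{\p_i : i \in J\}$ is a \emph{proper} subspace of $\Real^r$ --- a hypothesis implied by $|J| < r$, since $\dim W \le |J|$.

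First I would normalize to $\L^* = \I$. Setting $\q_i := (\L^*)^{1/2}\p_i$, the matrix $(\L^*)^{1/2}$ is nonsingular, so $\mbox{rank}(\q_1,\ldots,\q_m) = r$, and $\p_i^\top\L^*\p_i = \|\q_i\|_2^2$; hence the active set $J$ and the assumption ``$W$ proper'' transfer verbatim to the $\q_i$. The map $\tilde\L \mapsto (\L^*)^{1/2}\tilde\L(\L^*)^{1/2}$ is a bijection between the feasible regions of $\MVEEProb(\{\pm\q_1,\ldots,\pm\q_m\})$ and $\MVEEProb(\SC)$ that changes the objective only by the additive constant $\log\det\L^*$; therefore $\I$ is the optimal solution of $\MVEEProb(\{\pm\q_i\})$, i.e.\ the unit ball is the origin-centered MVEE of $\{\pm\q_i\}$. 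In these coordinates $\|\q_i\|_2 = 1$ for $i \in J$ and $\|\q_i\|_2 < 1$ for $i \notin J$, because the feasibility of $\I$ gives $\|\q_i\|_2 \le 1$ for all $i$.

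The key step is the construction of the perturbation. Since $W \subsetneq \Real^r$, choose a unit vector $\w \in \Real^r$ with $\w \perp W$ and put $M := \w\w^\top \in \SymMat^r$; for $t > 0$ consider the candidate $\I + tM$. For $i \in J$ we have $\w^\top\q_i = 0$, so $\q_i^\top(\I + tM)\q_i = \|\q_i\|_2^2 + t(\w^\top\q_i)^2 = 1$ and the constraint remains satisfied (with equality); for $i \notin J$ we have $\|\q_i\|_2 < 1$, so $\q_i^\top(\I + tM)\q_i < 1$ once $t$ is small enough, uniformly over the finitely many such $i$. Since also $\I + tM \succ \0$, the matrix $\I + tM$ is feasible for $\MVEEProb(\{\pm\q_i\})$ for all sufficiently small $t > 0$. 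But the eigenvalues of $\I + tM$ are $1+t$ (simple) and $1$ (multiplicity $r-1$), whence $\det(\I + tM) = 1 + t > 1 = \det\I$, so $-\log\det(\I + tM) < -\log\det\I$ --- contradicting the optimality of $\I$. Therefore $W = \Real^r$, so $J$ contains $r$ indices whose $\q_i$ are linearly independent, giving $|J| \ge r$ and proving the proposition.

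The argument is essentially routine, and I expect the only points requiring care to be bookkeeping: making sure the existence of $\L^*$ is legitimately in hand (it is, by Assumption~\ref{Assump: Full row rank}, exactly as invoked for $\MVEEProb$ in the text preceding the statement), checking that the affine reduction to the unit ball really does preserve both the feasible region and optimality, and --- the one conceptual subtlety --- noticing that one should \emph{keep the active constraints tight} under the perturbation rather than forcing them to slacken. That is why the one-dimensional ``push-out'' $M = \w\w^\top$, which leaves every contact point exactly on the boundary, already does the job and no second-order expansion of the constraints is needed.
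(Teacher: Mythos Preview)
Your proof is correct. The normalization to $\L^* = \I$ via the change of coordinates $\q_i = (\L^*)^{1/2}\p_i$ is handled properly, the rank-one perturbation $\I + t\,\w\w^\top$ with $\w$ orthogonal to the span of the active points does exactly what you claim (keeps active constraints tight, keeps inactive ones strict for small $t$, strictly increases the determinant), and the contradiction with optimality follows. You in fact obtain the slightly stronger conclusion that the active points span $\Real^r$, which of course implies $|J| \ge r$.

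As for comparison: the present paper does not give a proof of this proposition at all --- it is quoted as Lemma~11 of \cite{Miz14} and left without argument. So there is no ``paper's own proof'' to set yours against here. Your argument is the classical John-ellipsoid optimality argument specialized to the origin-centered (symmetric) case, and it is entirely self-contained; nothing beyond what is already set up in the paragraph preceding the proposition (existence of $\L^*$ under Assumption~\ref{Assump: Full row rank}, the convex formulation $\MVEEProb(\SC)$) is needed.
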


\begin{prop}[Proposition 3 and Corollary 4 of \cite{Miz14}] \label{Prop: Vertices found by MVEE}
 Consider an $(r-1)$-dimensional simplex in $\Real^r$.
 Let $\p_1, \ldots, \p_r \in \Real^r$ be the vertices, 
 and let $\P = (\p_1, \ldots, \p_r) \in \Real^{r \times r}$.
 \begin{enumerate}[{\normalfont (a)}]
  \item 
	Suppose that $\q_1, \ldots, \q_n \in \Real^r$ belong to the simplex.
	Construct a set 
	$\SC = \{\pm \p_1, \ldots, \pm \p_r, \pm \q_1, \ldots, \pm \q_n\}$.
	Then, the origin-centered MVEE for $\SC$ only touches the vertices $\p_1, \ldots, \p_r$ 
	with plus-minus signs.

  \item 
	Suppose that $\q_1, \ldots, \q_n \in \Real^r$ are given as
	$\q_i = \P\k_i$ by using $\k_i \in \Real^r$ such that $||\k_i||_2 < 1$.
	Construct a set 
	$\SC = \{\p_1, \ldots, \pm \p_r, \pm \q_1, \ldots, \pm \q_n \}$.
	Then, the origin-centered MVEE for $\SC$ only touches the vertices $\p_1, \ldots, \p_r$ of 
	with plus-minus signs.
 \end{enumerate}
\end{prop}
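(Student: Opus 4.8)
The plan is to standardize coordinates so that the simplex becomes the standard one, and then to certify that the origin-centered unit ball is the MVEE by checking optimality conditions. Since $\p_1,\ldots,\p_r$ are the vertices of an $(r-1)$-dimensional simplex in $\Real^r$, the matrix $\P$ is nonsingular (this is implicit in the setup and is exactly what makes Assumption~\ref{Assump: Full row rank} hold for the sets $\SC$ considered here). The linear map $\x\mapsto\P^{-1}\x$ sends each $\p_i$ to the unit vector $\e_i$ and sends an origin-centered ellipsoid $\{\x:\x^\top\L\x\le 1\}$ to $\{\y:\y^\top\M\y\le 1\}$ with $\M=\P^\top\L\P$; because $\det\M=(\det\P)^2\det\L$, the objective $-\log\det$ changes only by an additive constant, so $\MVEEProb(\SC)$ is equivalent to $\MVEEProb(\SC')$ for the transformed point set $\SC'=\{\pm\e_1,\ldots,\pm\e_r,\pm\k_1,\ldots,\pm\k_n\}$, where $\k_j=\P^{-1}\q_j$, and a point is active for the one MVEE if and only if its image is active for the other.

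Next I would locate the transformed points $\k_j$ relative to the unit ball. In case~(a), $\q_j$ lies in $\mathrm{conv}\{\p_1,\ldots,\p_r\}$, so $\k_j$ is a convex combination of $\e_1,\ldots,\e_r$, i.e.\ $\k_j\ge\0$ and $\e^\top\k_j=1$; hence $\|\k_j\|_2\le\|\k_j\|_1=1$, with equality only when exactly one coordinate of $\k_j$ is nonzero, that is, when $\k_j=\e_i$ for some $i$ (equivalently $\q_j$ is a vertex). In case~(b) one has $\|\k_j\|_2<1$ directly from the hypothesis $\q_j=\P\k_j$ with $\|\k_j\|_2<1$. In both cases every $\k_j$ satisfies $\|\k_j\|_2\le 1$, and strictly so unless $\k_j$ coincides with a vertex.

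Then I would show that $\M^*=\I$ solves $\MVEEProb(\SC')$. Feasibility is clear from $\|\e_i\|_2=1$ and $\|\k_j\|_2\le 1$. For optimality I would invoke the KKT conditions of this convex program: choosing multipliers $\mu_{\e_i}=\mu_{-\e_i}=\tfrac12$ and $\mu_{\pm\k_j}=0$, the stationarity condition $\M^{-1}=\sum_{\p\in\SC'}\mu_\p\,\p\p^\top$ reduces to $\I=\sum_i(\mu_{\e_i}+\mu_{-\e_i})\,\e_i\e_i^\top$, which holds; complementary slackness holds because the only tight constraints at $\M=\I$ are those coming from $\pm\e_i$; and since $-\log\det$ is strictly convex on the cone of positive definite matrices and the feasible set is convex, the minimizer is unique, so $\M^*=\I$. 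Alternatively, one may recall that the unit ball is the origin-centered MVEE of $\mathrm{conv}\{\pm\e_i\}$ and observe that adjoining the points $\pm\k_j$, all of which lie inside it, leaves the MVEE unchanged.

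Finally, the active points of the unit ball among $\SC'$ are exactly $\pm\e_1,\ldots,\pm\e_r$ (together with any $\k_j$ that happens to equal a unit vector, which is a vertex anyway), so pulling back through $\P$ shows that the origin-centered MVEE for $\SC$ touches precisely $\pm\p_1,\ldots,\pm\p_r$. I expect the only delicate points to be the standing nonsingularity of $\P$ and the degenerate sub-case of~(a) in which some $\q_j$ is itself a vertex — there ``touches only the vertices'' remains correct since $\q_j$ is a vertex — together with the bookkeeping for uniqueness of the MVEE and the complementary-slackness check; the geometric content is otherwise immediate once the coordinates are standardized.
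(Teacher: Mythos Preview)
The paper does not supply its own proof of this proposition; it simply cites \cite{Miz14} (and remarks that part~(a) is also proved in \cite{Gil13}). So there is nothing in the paper to compare against directly.

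Your argument is correct and is essentially the natural one. The reduction by the nonsingular map $\P^{-1}$ to the standard simplex is exactly the invariance later recorded in the paper as Lemma~\ref{Lemm: Invariance of active point set}; the inequality $\|\k\|_2\le\|\k\|_1$ for nonnegative $\k$ with equality only at a unit vector is the right way to locate the transformed points; and the KKT certificate with multipliers $\tfrac12$ on each $\pm\e_i$ and zero elsewhere is the clean way to pin down $\M^*=\I$ and its uniqueness. Your handling of the degenerate sub-case in~(a) where some $\q_j$ coincides with a vertex is also correct: the active set then still consists only of vertex points. Nothing is missing.
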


The proof for Proposition \ref{Prop: Vertices found by MVEE}(a) is also found in  \cite{Gil13}.
Proposition \ref{Prop: Vertices found by MVEE}(b)
can be thought of an extension of that of Proposition \ref{Prop: Vertices found by MVEE}(a) 
in some sense since $\q_1, \ldots, \q_n$ in Proposition \ref{Prop: Vertices found by MVEE}(a) 
can be written as $\q_i = \P\k_i$ by using $\k_i$ such that $||\k_i||_1 = 1$ and $\k_i \ge \0$.
It should be noted that Assumption \ref{Assump: Full row rank} is satisfied
in Proposition \ref{Prop: Vertices found by MVEE}
since $\p_1,\ldots,\p_r$ are the vertices of $(r-1)$-dimensional simplex.

We see from Proportions \ref{Prop: Number of touching point} and \ref{Prop: Vertices found by MVEE}
that the following geometric properties hold.
Consider a finite number of points, including the $r$ vertices, 
in an $(r-1)$-dimensional simplex in $\Real^r$.
Then, the origin-centered MVEE for the points touches the $r$ vertices. 
Furthermore, 
it also holds even if some amount of perturbation is added to the points.
If the perturbation is large, the ellipsoid can touch more that $r$ points.

\subsection{Algorithm Description} \label{Subsec: Algorithm description}

Recall Observation~\ref{Obs: Dist of eigenvec};
$\p_1, \ldots, \p_m \in \Real^r$
are the column vectors of $\V_r^\top \D^{-1/2} \in \Real^{r \times m}$ 
whose transpose is the optimal solution of problem $\RelaxProb$.
Suppose that Observation~\ref{Obs: Dist of eigenvec} holds.
Then, the convex hull of $\p_1, \ldots, \p_m$ is similar to the shape of a simplex.
Proposition \ref{Prop: Vertices found by MVEE}
tells us that the MVEE for the set $\SC = \{\pm \p_1, \ldots,  \pm \p_m\}$
can touch some vectors among $\p_1, \ldots, \p_m$ that are 
close to the vertices of the simplex.
Such vectors should serve as the representative points of clusters.
Accordingly, clusters can be found by assigning $\p_1, \ldots, \p_m$ 
to the representative points on the base of their contribution.
Here, we should pay attention to the following issue.
Proposition \ref{Prop: Number of touching point} implies that 
the MVEE for the $\SC$ has a possibility to touch more than $r$ vectors
among $\p_1, \ldots, \p_m$.
The algorithm for a separable NMF problem such as SPA \cite{Gil14} can be used 
for selecting $r$ points from the candidates.
We will explain the problem and algorithms in Section \ref{Subsec: Separable NMF problem}.

\begin{algorithm}
 \caption{NCER}
 \label{Alg: NCER}
 \textbf{Input:}  
 A data set $\SC = \{\a_1, \ldots, \a_m\}$, a cluster number $r$,  
 a neighbor number $p$, and a similarity function $k$. \\
 \textbf{Output:} Clusters $\SC_1, \ldots, \SC_r$.
 \begin{enumerate}
  \item[\textbf{1:}] 
	       Run steps 1-3 of Algorithm \ref{Alg: NC}, 
	       and let $\SC = \{\pm \p_1, \ldots, \pm \p_m\}$
	       where these vectors $\p_1, \ldots, \p_m \in \Real^r$ are the columns
	       of $\V_r^\top\D^{-1/2} \in \Real^{r \times m}$.

  \item[\textbf{2:}] 
	       Compute an origin-centered MVEE for the set $\SC$, 
	       and find all active points.
	       Let $\IC$ be the index set of the active points.

  \item[\textbf{3:}] 
	       If $|\IC| = r$, set $\JC$ as $\JC = \IC$.
	       Otherwise, select $r$ elements from $\IC$ by SPA, 
	       and construct the set $\JC$ of these $r$ elements.

  \item[\textbf{4:}] 
	       Assign each $\p_1, \ldots, \p_m$ to any one of 
	       $\p_j, \ j \in \JC$ on the base of contribution,
	       and construct $r$ clusters $\SC_1, \ldots, \SC_r$.
	       
 \end{enumerate}
\end{algorithm}

Our algorithm is presented in Algorithm \ref{Alg: NCER}.
We denote it by {\it NCER} since it is a variant of NC 
and uses an ellipsoid rounding to find clusters in the final step. 
Below, we explain the details of steps 3 and 4.

We see from Proposition \ref{Prop: Number of touching point} that 
the index set of active points $\IC$ constructed in the step 2
contains at least $r$ elements
since the vectors $\p_1, \ldots, \p_m \in \Real^r$ satisfy $\mbox{rank}(\P) = r$ 
for $\P = (\p_1, \ldots, \p_m) \in \Real^{r \times m}$.
Therefore, step 3 constructs a set $\JC$ by setting $\JC = \IC$ if $|\IC| = r$;
otherwise, selects $r$ elements from $\IC$, and constructs a set $\JC$.
SPA can be used for the selection.
If the convex hull of vectors $\p_i, \ i \in \IC$ is similar to 
the shape of a simplex, 
SPA can select $r$ vectors from $\p_i, \ i \in \IC$ that are close 
to the vertices of the simplex.
We refer the reader to Algorithm 1 of \cite{Gil14} for the details of SPA.

Step 4 assigns $\p_1, \ldots, \p_m$ to $r$ representative points $\p_j, \ j \in \JC$, and 
constructs $r$ clusters.
The assignment is conducted on the base of contribution rate of representative points 
in generating a point.
Let $\P(\JC)$ denote an $r \times r$ matrix consisting of 
the representative points $\p_j, \ j \in \JC$.
Namely, $\P(\JC) = (\p_j : j \in \JC)$.
Consider some $\p_i, \ i \in \{1, \ldots, m\}$. 
We define a contribution rate $\w$ of the representative points to $\p_i$ 
as the optimal solution of the problem,
\begin{equation*}
 \begin{array}{rrrr}
  \mbox{minimize} & ||\P(\JC)\w - \p_i||_2^2 & \mbox{subject to} & \w \ge \0.
 \end{array}
\end{equation*}
The $r$-dimensional vector $\w$ is the decision variable.
This is a convex optimization problem, and sometimes referred to as a 
{\it nonnegative least square (NLS) problem}.
The optimal solution can be obtained by using an active set algorithm.
We refer the reader to \cite{Law87} for the details of the algorithm.
Let $\w^*$ be the optimal solution of the problem.
We find the index $j$ of the largest element among the $r$ elements of $\w^*$, and 
assign $\p_i$ to a cluster $\SC_{j}$.

\section{Connection to Separable NMF}
\label{Sec: Connection to separable NMF}
In this section, 
we will see that NCER for spectral clustering 
is closely related to ER for separable NMF in \cite{Miz14}.
In fact, NCER shares similarity with ER when a neighbor number $p$ 
is set to be equal to the number of data points.

\subsection{Separable NMF Problem} \label{Subsec: Separable NMF problem}
Consider a $d$-by-$m$ nonnegative matrix $\A$ such that 
\begin{equation}
 \A = \F \W \ \mbox{for} \ \F \in \Real^{d \times r}_+ \ 
  \mbox{and} \ \W = (\I, \K) \Pib \in \Real^{r \times m}_+.
  \label{Eq: Separable matrix}
\end{equation}
Here, $\I$ is an $r$-by-$r$ identity matrix, $\K$ is an $r$-by-$(m-r)$ nonnegative matrix, and
$\Pib$ is an $m$-by-$m$ permutation matrix.
A nonnegative matrix $\A$ is called a {\it separable matrix} if it can be decomposed into 
$\F$ and $\W$ of the form (\ref{Eq: Separable matrix}).
We call the $\F$ a {\it basis matrix} and the $\W$ a {\it weight matrix}.
A separable NMF problem is a problem of finding 
the basis and weight matrices from a separable matrix.
To be precise, 
\begin{quote}
 {\bf (Separable NMF problem)} \
 Suppose that we are given a separable matrix $\A$ of the form (\ref{Eq: Separable matrix}) 
 and an integer number $r$.
 Find an index set $\IC$ with $r$ elements such that $\F = \A(\IC)$. 
\end{quote}
Here, $\A(\IC)$ denotes the submatrix of $\A$ which consists of the column vectors 
with indices in $\IC$.
Namely,  $\A(\IC) = (\a_i : i \in \IC)$ for the column vectors $\a_i$ of $\A$.

A separable NMF is the special case of NMF.
As we will mention in Section \ref{Sec: Issues},
solving an NMF problem is hard in general.
As a remedy for the hardness,
Arora et al.\ in \cite{Aro12a, Aro12b} proposed to make the assumption called separability on the NMF problem. 
Then, it turns into a tractable problem under the assumption.
An NMF problem under a separability assumption is referred to as a separable NMF problem.
Separable NMF has applications in clustering and topic extraction of documents 
\cite{Aro12a, Aro12b, Kum13} and endmember detection of hyperspectral images \cite{Gil13, Gil14}.

We shall look at the separable NMF problem from geometric  point of view.
Since the data matrix $\A$ is a separable one of the form (\ref{Eq: Separable matrix}),
the conical hull of the column vectors of $\A$ is an $r$-dimensional cone in $\Real^d$.
It has $r$ extreme rays, and those correspond to the column vectors of basis matrix $\F$.
The intersection of the cone with a hyperplane is an $(r-1)$-dimensional simplex in $\Real^d$,
and the $r$ vertices correspond to the column vectors of basis matrix $\F$.
Figure~\ref{Fig: geoSepMat} illustrates the geometric interpretation of separable matrix.
Hence, a separable NMF problem can be rewritten as a problem of finding all vertices 
of $(r-1)$-dimensional simplex in $\Real^d$.
Several types of algorithms have been proposed for a separable NMF problem.
SPA\cite{Gil14} and XRAY\cite{Kum13} as well as ER
are designed on the geometric interpretation of separable matrix.

\begin{figure}[h]
\begin{center}
 \includegraphics[width=0.4\linewidth]{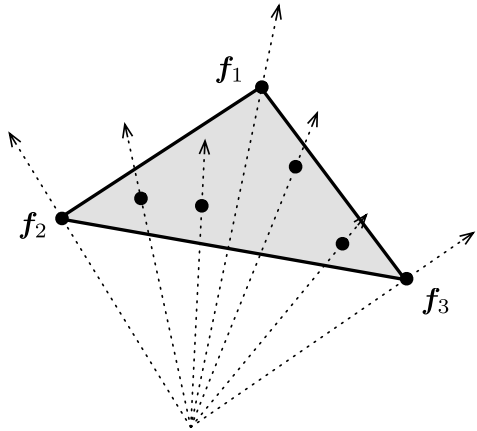}
 \caption{Geometric interpretation of a separable matrix $\A$ with $(d,m,r) = (3,7,3)$.
 Black points are the intersection points of
 the column vectors of $\A$ with a hyperplane.
 The region surrounded by the black lines represents the convex hull of intersection points.
 $\f_1, \f_2$, and $\f_3$ are the column vectors of basis matrix $\F$ in $\A$.}
 \label{Fig: geoSepMat}
\end{center}
\end{figure}

It is ideal that an algorithm for a separable NMF problem is robust 
to perturbation which disturbs the separability structure.
For a separable matrix $\A$ of (\ref{Eq: Separable matrix}) and a $d$-by-$m$ real matrix $\N$,
we consider the matrix
\begin{equation}
 \wt{\A}= \A + \N.
  \label{Eq: Near-separable matrix}
\end{equation}
We call the $\wt{\A}$ a {\it near-separable matrix}
since the separability structure of $\A$ is disturbed by $\N$.
A separable NMF algorithm is said to be {\it robustness} 
if the algorithm can find a matrix close to 
the basis matrix in a near-separable matrix.
It has been shown theoretically and practically 
in \cite{Miz14} and \cite{Gil14} that ER and SPA are robust algorithms.

\subsection{ER Algorithm} \label{Subsec: ER algorithm}
We give a precise description of ER.
As already mentioned, 
the algorithm is based on the geometric interpretation of separable matrix, and 
Propositions \ref{Prop: Number of touching point} and \ref{Prop: Vertices found by MVEE} 
are the backbone of the algorithm.
In the application of Proposition \ref{Prop: Vertices found by MVEE} to a separable NMF problem, 
we need to pay attention to the following point.
The proposition requires that an $(r-1)$-dimensional simplex is in an $r$-dimensional real space.
But, in the problem, it can be in higher dimensional space.
Therefore, a dimension reduction is required to be performed.

\begin{algorithm}
 \caption{ER}
 \label{Alg: ER}
 \textbf{Input:}  A $d$-by-$m$ nonnegative matrix $\A$, and a basis number $r$. \\
 \textbf{Output:} An index set $\JC$.
 \begin{enumerate}
  \item[\textbf{1:}] 
	       Perform a dimension reduction of $\A \in \Real^{d \times m}_+$ 
	       from $d$ to $r$ and construct a dimensionally reduced matrix 
	       $\B \in \Real^{r \times m}$ through SVD.

  \item[\textbf{2:}] 
	       Remove all of zero column vectors in $\B$ if exist.
	       Construct a diagonal matrix $\S \in \Real^{m \times m}$ such that 
	       all column vectors of $\B\S$ lie on a hyperplane $\HC$.
	       Let $\SC = \{\pm \q_1, \ldots, \pm \q_m\}$ 
	       for the column vectors  $\q_1, \ldots, \q_m \in \Real^r$ of $\B\S$.

  \item[\textbf{3:}] 
	       Compute an origin-centered MVEE for the set $\SC$, 
	       and find all active points.
	       Let $\IC$ be the index set of the active points.

  \item[\textbf{4:}] 
	       If $|\IC| = r$, set $\JC$ as $\JC = \IC$.
	       Otherwise, select $r$ elements from $\IC$ by SPA, 
	       and construct the set $\JC$ of these elements.

 \end{enumerate}
\end{algorithm}

Algorithm~\ref{Alg: ER} describes each step of ER.
In the description, 
the notation $m$ is still used to denote the number of column vectors of $\B$ 
for simplicity
although it varies if the zero column vectors of $\B$ are removed in the step 2.
We explain the details of the steps 1 and 2 although the steps 3 and 4
are the same as the steps 2 and 3 of NCER.

Step 1 computes the singular value decomposition (SVD) of $\A$. 
The SVD of $\A$ takes the form
\begin{equation}
\A = \U\Sigma\V^\top.
 \label{Eq: SVD form}
\end{equation}
$\U$ is a $d$-by-$d$ orthogonal matrix 
consisting of left singular vectors $\u_1, \ldots, \u_d \in \Real^d$.
$\V$ is an $m$-by-$m$ orthogonal matrix 
consisting of right singular vectors $\v_1, \ldots, \v_m \in \Real^m$.
$\Sigma$ is a $d$-by-$m$ diagonal matrix 
$(\mbox{diag}(\sigma_1, \ldots, \sigma_d), \0)$
consisting of  singular values $\sigma_1, \ldots, \sigma_d$
with the relation $\sigma_1 \ge \cdots \ge \sigma_d$.
Here, the $\0$ is a $d$-by-$(m-d)$ zero matrix.
We pick up the $r$ largest singular values $\sigma_1, \ldots, \sigma_r$, 
and construct the $r$-by-$r$ diagonal matrix 
$\Sigma_r = \mbox{diag}(\sigma_1, \ldots, \sigma_r)$.
The matrix $\U_r\Sigma_r\V_r^\top$ is known to be the best rank-$r$ approximation matrix to $\A$ 
in measured by matrix $2$-norm.
Here, $\U_r = (\u_1, \ldots, \u_r) \in \Real^{d \times r}$ 
for the $r$ column vectors $\u_1, \ldots, \u_r$ of $\U$, 
and $\V_r = (\v_1, \ldots, \v_r) \in \Real^{m \times r}$ 
for those $\v_1, \ldots, \v_r$ of $\V$.
It should be noted that we have $\A = \U\Sigma\V^\top = \U_r\Sigma_r\V_r^\top$ 
if $\A$ is a separable matrix.
After the SVD computation of $\A$, step 1 constructs
\begin{equation*}
 \B = \Sigma_r \V_r^\top \in \Real^{r \times m}.
\end{equation*}
We call the matrix $\B$ a {\it dimensionally reduced matrix} for $\A$.

Step 2 constructs a diagonal matrix $\S$ that scales the column vectors
of $\B$ to lie on a hyperplane.  
There exists such a hyperplane since $\B$ has no zero column vectors.
We consider a hyperplane $\HC = \{ \x \in \Real^d : \w^\top \x = z\}$.
Here, $\w$ is a $d$-dimensional real vector such that the elements are all nonzero 
and $z$ is a nonzero real number.
The intersection points of column vectors $\b_i$ of $\B$ with $\HC$ are given as 
$s_i \b_i$ for $s_i = z  / \w^\top \b_i$. 
Thus, in this step, we construct $\S = \mbox{diag}(s_1, \ldots , s_m)$ by using the $s_i$.

We need to put an assumption on a basis number $r$ 
which is set as an input parameter in advance.
Step 3 computes an MVEE for the column vectors $\q_1, \ldots, \q_m$ 
of $\B\S \in \Real^{r \times m}$.
In the MVEE computation,
Assumption \ref{Assump: Full row rank} needs to be satisfied 
and the computation is allowed under the assumption.
Therefore, we put an assumption on a basis number $r$ 
that $r \le \mbox{rank}(\A)$ for a data matrix $\A$.
If the assumption holds, the $r$ largest singular values of $\A$ are positive, 
and thus, we have $\mbox{rank}(\B\S) = r$.

ER has been shown in \cite{Miz14} to have the following properties.
Let $\A$ be a separable matrix of the form (\ref{Eq: Separable matrix}).
Then, ER for an input data $(\A, r)$ with $r = \mbox{rank}(\A)$ 
returns the index set $\JC$ such that $\F = \A(\JC)$.
Let $\wt{\A}$ be a near-separable matrix of 
the form (\ref{Eq: Near-separable matrix}). If $\N$ satisfies 
$||\N||_2 \le \epsilon(\F, \K)$, then, 
ER for an input data $(\wt{\A}, r)$ with $r = \mbox{rank}(\A)$ 
returns the index set $\JC$
such that $||\F - \wt{\A}(\JC)|| \le \epsilon(\F,\K)$.
Here, $\epsilon(\F,\K)$ is a nonnegative real number determined by 
the basis matrix $\F$ and  weight matrix $\K$.
For the details, we refer the reader to Theorem 9 of the paper.

\begin{remark}
 The original description of ER in \cite{Miz14} does not contain the step 3 
 of the above algorithm description.
 If the input data matrix $\A$ is a separable one, we can assume without loss of generality
 that the column vectors of $\A$ lie on a hyperplane. 
 This is because we have $\A \D_1 = \F\D_2 \D_2^{-1} \W \D_1$ for nonsingular 
 matrices $\D_1$ and $\D_2$.
 Hence, we can skip the step 3 
 if $\A$ is a separable matrix and 
 the column vectors of $\A$ are scaled to lie on a hyperplane in advance.
\end{remark}

\subsection{Connection between NCER and ER Algorithms}
We observe the active point sets constructed 
in the step 2 of NCER and the step 3 of ER,
and then, see that the two sets coincide with each other 
under some assumptions on data points and parameters of the algorithms.
In the next steps, the algorithms select $r$ elements from the active point sets, 
and construct sets by collecting them.
The two sets are different in general.
But, we see that those sets coincide if we modify one step of ER.

We put an assumption on data points $\a_1, \ldots, \a_m \in \Real^d$ 
for NCER and ER.
\begin{assump} \label{Assump: Data points}
 Any of $m$ data points $\a_1, \ldots, \a_m \in \Real^d$ are nonnegative and 
 not a zero vector.
\end{assump}
The first part of Assumption \ref{Assump: Data points} is required for ER.
In below discussion, we need to handle a diagonal matrix
\begin{equation} \label{Eq: Degree mat}
 \D = \mbox{diag}(d_1, \ldots, d_m) \in \Real^{m \times m} \ \mbox{with} \
  d_i = \a_i^\top (\a_1 + \cdots + \a_m)
\end{equation}
for data points $\a_1, \ldots, \a_m$.
The second part of Assumption \ref{Assump: Data points} is used to ensure that the $\D$ is nonsingular.
Let $\A = (\a_1, \ldots, \a_m) \in \Real^{d \times m}$ for the data points $\a_1, \ldots, \a_m \in \Real^d$.
We put assumptions for the parameters of NCER and ER.
\begin{assump} \label{Assump: NCER}
 For the NCER algorithm,  we have the following settings.
 \begin{enumerate}[{\normalfont (a)}]
  \item A similarity function in the input is set as $k(\a_i,\a_j) = \a_i^\top\a_j$.
  \item A neighbor number in the input is set as $p = m$.
 \end{enumerate}
\end{assump}
\begin{assump} \label{Assump: ER}
 Let $\D$ be of (\ref{Eq: Degree mat}). 
 For the ER algorithm, we have the following settings.
 \begin{enumerate}[{\normalfont (a)}]
  \item A data matrix in the input is set as $\A\D^{-1/2}$.
  \item A basis number in the input is set as $r$ satisfying $r \le \mbox{rank}(\A)$.
  \item A diagonal matrix in the step 2 is set as $\S = \D^{-1/2}$.
 \end{enumerate}
\end{assump}
It should be noted that 
the $\D$ in Assumption \ref{Assump: ER} is nonsingular under Assumption \ref{Assump: Data points}.
Assumption \ref{Assump: NCER}(a) chooses a polynomial function of
(\ref{Eq: Poly}) with $b=0$ and $c=1$ as a similarity function.
This function gives a similarity of two data points as its inner product.
Assumption \ref{Assump: NCER}(b) means not to ignore small values of similarity,
and take into account all the values in a graph.
Assumption \ref{Assump: ER}(a) scales the data points $\a_1, \ldots, \a_m$ 
into $d_1^{-1/2} \a_1, \ldots, d_m^{-1/2} \a_m$ by the diagonal elements $d_1, \ldots, d_m$ 
of $\D$. We give a remark on this assumption in the last of this section.
We see below that the $\D$ corresponds to a degree matrix of a graph 
constructed by the similarity function $k$ and $p$-nearest neighbor set 
under Assumption \ref{Assump: NCER}.
As mentioned in the last part of Section~\ref{Subsec: ER algorithm},
Assumption \ref{Assump: ER}(b) ensures to allow us to carry out an MVEE computation 
in the step 3 of ER. 
We see below that Assumption \ref{Assump: ER}(c) makes the column vectors of 
dimensionally reduced matrix lie on a hyperplane.

\begin{theo} \label{Theo: Relation of NCER and ER}
 Assume that 
 Assumptions \ref{Assump: Data points}, \ref{Assump: NCER}, and \ref{Assump: ER}
 hold for $m$ input data $\a_1, \ldots, \a_m \in \Real^d$ and also the NCER and ER algorithms.
 Let $\IC_1$ be the index set of active points in the step 2 of NCER, and 
 $\IC_2$ be that in the step 3 of ER.  Then, we have $\IC_1 = \IC_2$.
\end{theo}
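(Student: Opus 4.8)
The plan is to show that both NCER and ER, under the stated assumptions, end up computing an origin-centered MVEE for the {\it same} set of $\pm$-points in $\Real^r$, so that the active-point index sets must coincide. The key is to track what matrix each algorithm feeds into the MVEE computation and to verify that, up to an orthogonal transformation that does not change which columns are active, these matrices agree.

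First I would unwind the NCER side. Under Assumption~\ref{Assump: NCER}(a)--(b), the similarity is $k(\a_i,\a_j)=\a_i^\top\a_j$ and $p=m$, so no weights are discarded: the adjacency matrix is $\K=\A^\top\A$, and the degree of vertex $v_i$ is $d_i=\sum_{j=1}^m \a_i^\top\a_j=\a_i^\top(\a_1+\cdots+\a_m)$, which is exactly the diagonal matrix $\D$ of (\ref{Eq: Degree mat}). Thus the graph Laplacian is $\L=\D-\A^\top\A$ and the normalized Laplacian is $\widebar{\L}=\D^{-1/2}\L\D^{-1/2}=\I-\D^{-1/2}\A^\top\A\D^{-1/2}$. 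The $r$ smallest eigenvectors of $\widebar{\L}$ are the $r$ {\it largest} eigenvectors of $\D^{-1/2}\A^\top\A\D^{-1/2}=(\A\D^{-1/2})^\top(\A\D^{-1/2})$, i.e.\ the top $r$ right singular vectors of $\A\D^{-1/2}$. So $\V_r$ in step 3 of Algorithm~\ref{Alg: NC} is (a choice of) the top-$r$ right singular matrix of $\A\D^{-1/2}$, and the NCER points are the columns of $\V_r^\top\D^{-1/2}$.

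Next I would unwind the ER side. By Assumption~\ref{Assump: ER}(a) the input matrix is $\A\D^{-1/2}$; write its SVD as $\A\D^{-1/2}=\U\Sigma\V^\top$. Step 1 of ER forms the dimensionally reduced matrix $\B=\Sigma_r\V_r^\top$ (same $\V_r$ as above, since the singular vectors of $\A\D^{-1/2}$ are exactly the eigenvectors just discussed; here Assumption~\ref{Assump: ER}(b) guarantees $\sigma_1,\dots,\sigma_r>0$ so $\B$ has full row rank). By Assumption~\ref{Assump: ER}(c), the scaling in step 2 is $\S=\D^{-1/2}$, so ER computes the MVEE of $\{\pm\q_1,\dots,\pm\q_m\}$ where $(\q_1,\dots,\q_m)=\B\S=\Sigma_r\V_r^\top\D^{-1/2}$. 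Comparing, $\B\S=\Sigma_r\,(\V_r^\top\D^{-1/2})$, whereas NCER uses $\V_r^\top\D^{-1/2}$ directly — these differ by the left factor $\Sigma_r$, an $r\times r$ nonsingular (diagonal, positive) matrix. I would then invoke the standard invariance: if $\M$ is any nonsingular $r\times r$ matrix and $\P'=\M\P$, then $\L\mapsto \M^{-\top}\L\M^{-1}$ is a bijection between feasible (resp.\ optimal) solutions of $\MVEEProb(\{\pm\p_i\})$ and $\MVEEProb(\{\pm\p_i'\})$, and $\p_i^\top\L^*\p_i = (\p_i')^\top (\L^*)'\p_i'$, so the set of active indices is unchanged. (One should also note that the two algorithms use the ``same'' $\V_r$: if the eigenvalue solver returns a different orthonormal basis of an eigenspace, that is absorbed into an orthogonal right factor, which is a special case of the same invariance.) Applying this with $\M=\Sigma_r$ gives $\IC_1=\IC_2$.

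The main obstacle I expect is bookkeeping around the non-uniqueness of the singular/eigen-decomposition: whether eigenvalues are repeated, whether the solver's $\V_r$ for NCER literally equals the solver's $\V_r$ for ER, and the removal of zero columns in ER's step 2 (here there are none, since each $\a_i\neq\0$ and $\D$ is nonsingular by Assumption~\ref{Assump: Data points}, so each column $\q_i$ of $\Sigma_r\V_r^\top\D^{-1/2}$ is nonzero — this needs a one-line check using $\Sigma_r\succ\0$ and that $\V_r$ has orthonormal columns together with $\mathrm{rank}(\A\D^{-1/2})\ge r$). Everything else is the linear-algebra identities above plus the MVEE invariance lemma, which is routine.
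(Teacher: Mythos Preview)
Your proposal is correct and follows essentially the same route as the paper: both arguments identify that, under the stated assumptions, the NCER points are the columns of $\V_r^\top\D^{-1/2}$ while the ER points are the columns of $\Sigma_r\V_r^\top\D^{-1/2}$, and then invoke the invariance of the MVEE active-point set under a nonsingular linear transformation (the paper isolates this as Lemma~\ref{Lemm: Invariance of active point set}). Your extra bookkeeping on SVD non-uniqueness and the absence of zero columns is a welcome bit of care that the paper treats more implicitly.
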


We prove this theorem in the remaining of this section.
First, we shall see in detail how vectors $\p_1, \ldots, \p_m \in \Real^r$ 
in the step 1 of NCER are constructed
under Assumptions \ref{Assump: Data points} and \ref{Assump: NCER}.
For the data points $\a_1, \ldots, \a_m \in \Real^r$, 
we let $\A = (\a_1, \ldots, \a_m) \in \Real^{d \times m}$. 
Under the assumptions, a graph Laplacian $\L$ for the data points is 
\begin{equation} \label{Eq: Graph Laplacian}
 \L = \D - \A^\top \A.
\end{equation}
Here, $\D$ is a degree matrix for an adjacency matrix $\A^\top\A$, and 
its form is given as (\ref{Eq: Degree mat}).
The normalized graph Laplacian $\widebar{\L}$ is of the form $\I - \D^{-1/2} \A^\top\A \D^{-1/2}$.
Hence, the eigenvalue decomposition of $\widebar{\L}$ can be obtained through the SVD of $\A\D^{-1/2}$. 
In a similar way to the description of (\ref{Eq: SVD form}),
we write it as 
\begin{equation} \label{Eq: SVD for scaled A}
 \A\D^{-1/2} = \U\Sigma\V^\top.
\end{equation}
Here, $\U$ and $\V$ are respectively
$d$-by-$d$ and $m$-by-$m$ orthogonal matrices, 
and the column vectors are respectively the left and right singular vectors of $\A\D^{-1/2}$.
$\Sigma$ is a $d$-by-$m$ diagonal matrix 
$(\mbox{diag}(\sigma_1, \ldots, \sigma_d), \0)$, 
and $\sigma_1, \ldots, \sigma_d$ are the singular values of $\A\D^{-1/2}$.
The eigenvalue decomposition of $\widebar{\L}$ can be written as
\begin{eqnarray} \label{Eq: Eigenvalue decomp}
 \widebar{\L} 
 &=& \D^{-1/2}\L \D^{-1/2}  \nonumber \\
 &=& \I - \D^{-1/2} \A^\top\A \D^{-1/2} \nonumber \\
 &=& \V(\I - \Sigma^\top \Sigma)\V^\top.
\end{eqnarray}
by using the expressions (\ref{Eq: Graph Laplacian}) and (\ref{Eq: SVD for scaled A}).
We arrange the eigenvalues of $\widebar{\L}$ in ascending order, 
and denote the $i$th eigenvalue by $\lambda_i$.
Namely, $\lambda_1 \le \cdots \le \lambda_m$ holds.
Then, from the expression (\ref{Eq: Eigenvalue decomp}), we have 
\begin{equation*}
 \lambda_i = \left\{
 \begin{array}{ll}
  1 - \sigma_i^2, & i = 1, \ldots, d, \\
  1               & i = d+1, \ldots, m
 \end{array}
\right.
\end{equation*}
for $1 = \sigma_1 \ge \cdots \ge \sigma_d \ge 0$.
Here, the relation $\sigma_1 = 1$ comes from the fact that 
a normalized graph Laplacian is 
positive semidefinite and has a zero eigenvalue.
The column vector $\v_i$ of $\V$ is the eigenvector of 
$\widebar{\L}$, and corresponds to the eigenvalue $\lambda_i$ and also singular value $\sigma_i$.
Let $\V_r = (\v_1, \ldots, \v_r) \in \Real^{m \times r}$.
Under the assumptions, we see that 
$\p_1, \ldots, \p_m$ in the step 1 of NCER are the column vectors of 
\begin{equation} \label{Eq: P}
 \P = \V_r^\top \D^{-1/2},
\end{equation}
and the column vectors $\v_1, \ldots, \v_r$ of $\V_r$ 
are the $r$ right singular vectors of $\A\D^{-1/2}$ 
in correspondence with
the $r$ largest singular values $\sigma_1, \ldots, \sigma_r$ 
with $1=\sigma_1 \ge \cdots \ge \sigma_r  \ge 0$.

Next, we shall see in detail how vectors $\q_1, \ldots, \q_m \in \Real^r$ 
in the step 2 of ER 
are constructed under Assumptions \ref{Assump: Data points} and \ref{Assump: ER}.
Under the assumptions, step 1 computes the SVD of $\A\D^{-1/2}$ and constructs
a dimensionally reduced matrix $\B$ for the matrix.
Since the $\D$ is equivalent to the degree matrix constructed 
in NCER under Assumptions \ref{Assump: Data points} and \ref{Assump: NCER},
the SVD of $\A\D^{-1/2}$ coincides with  that of (\ref{Eq: SVD for scaled A}).
Let $\V_r = (\v_1, \ldots, \v_r) \in \Real^{m \times r}$ 
for the $r$ right singular vectors $\v_1, \ldots, \v_r$ 
in correspondence with the $r$ largest singular values $\sigma_1, \ldots, \sigma_r$
such that $1=\sigma_1 \ge \cdots \ge \sigma_r > 0$.
The last strictly inequality $\sigma_r > 0$ comes from Assumption \ref{Assump: ER}(b).
Also, let $\Sigma_r = \mbox{diag}(\sigma_1, \ldots, \sigma_r) \in \Real^{r \times r}$ 
for the $r$ singular values $\sigma_1, \ldots, \sigma_r$. 
Then, the dimensionally reduced matrix $\B$ is given as $\Sigma_r\V_r^\top \in \Real^{r \times m}$ 
by using those $\Sigma_r$ and $\V_r$.
From Assumption \ref{Assump: ER}(b), Step 2 constructs 
\begin{equation}\label{Eq: Q}
\Q =  \Sigma_r\V_r^\top \D^{-1/2}  \in \Real^{r \times m}
\end{equation}
by using $\D$ of (\ref{Eq: Degree mat}).
The $\V_r$ and $\D$ coincide with those of (\ref{Eq: P}).
As already shown in (\ref{Eq: Eigenvalue decomp}),
the $\V_r$ corresponds the $r$ eigenvectors of 
normalized graph Laplacian for $\L$ of (\ref{Eq: Graph Laplacian}).
We see from Proposition  \ref{Prop: Vecotrs-lie-on-hyperplane} 
that $\Q$ of (\ref{Eq: Q}) does not have any zero column vectors 
and the column vectors $\q_1, \ldots, \q_m$ lie on a hyperplane 
$\{\x \in \Real^d : \e_1^\top\x = \tau\}$ for some nonzero real number $\tau$.

The column vectors $\q_1, \ldots, \q_m$ of $\Q$ in (\ref{Eq: Q})
can be thought of as the images of those $\p_1, \ldots, \p_m$ of $\P$ in (\ref{Eq: P})
under a linear transformation $\Sigma_r$.
{This transformation is nonsingular since 
$\sigma_1, \ldots, \sigma_r$ are all positive under Assumption \ref{Assump: ER}(b).
It is well known that an MVEE computation is invariant under a nonsingular linear transformation;
see \cite{Boy04} for instance.
Therefore, the following proposition holds.
\begin{lemm} \label{Lemm: Invariance of active point set}
 Let $\G \in \Real^{r \times r}$ be a nonsingular matrix.
 Consider  $\p_1, \ldots, \p_m, \q_1, \ldots, \q_m \in \Real^r$ 
 such that $\q_i = \G \p_i$  for $i = 1, \ldots, m$.
 Let $\IC_1$ be the set of active points 
 in the origin-centered MVEE for a set $\SC = \{\pm \p_1, \ldots, \pm \p_m\}$ 
 and $\IC_2$ be that for a set $\TC = \{\pm \q_1, \ldots, \pm \q_m\}$.
 Then, we have $\IC_1 = \IC_2$.
\end{lemm}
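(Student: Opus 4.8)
The plan is to use the explicit convex formulation $\MVEEProb(\cdot)$ of the origin-centered MVEE together with the congruence transformation $\M \mapsto \G^\top \M \G$ that the substitution $\q_i = \G\p_i$ induces on the decision matrix. First I would check that the hypotheses needed for both MVEE computations carry over. Since $\G$ is nonsingular we have $\mbox{rank}(\G\P) = \mbox{rank}(\P) = r$, so Assumption \ref{Assump: Full row rank} holds for $\q_1,\ldots,\q_m$ as well, and both $\MVEEProb(\SC)$ and $\MVEEProb(\TC)$ possess a (unique) optimal solution; here I use that $\G$ maps $\SC = \{\pm\p_1,\ldots,\pm\p_m\}$ bijectively onto $\TC = \{\pm\q_1,\ldots,\pm\q_m\}$ because $\G(-\p_i) = -\q_i$.

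The computational core is the identity, valid for every symmetric $\M$ and every $i$,
\[
 \langle \q_i\q_i^\top, \M\rangle = \q_i^\top \M \q_i = \p_i^\top \G^\top \M \G \p_i = \langle \p_i\p_i^\top,\, \G^\top\M\G\rangle .
\]
Because $\G$ is nonsingular, $\M \succ \0$ if and only if $\G^\top\M\G \succ \0$, so this shows that $\M$ is feasible for $\MVEEProb(\TC)$ exactly when $\G^\top\M\G$ is feasible for $\MVEEProb(\SC)$. Moreover $\det(\G^\top\M\G) = (\det\G)^2\det\M$, hence the two objective values $-\log\det\M$ and $-\log\det(\G^\top\M\G)$ differ only by the additive constant $-2\log|\det\G|$. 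Consequently $\M^*$ is optimal for $\MVEEProb(\TC)$ if and only if $\L^* := \G^\top\M^*\G$ is optimal for $\MVEEProb(\SC)$, and by uniqueness of the MVEE this identifies $\M^* = \G^{-\top}\L^*\G^{-1}$.

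Finally I would read off the active sets from these optimal solutions. By definition, $i \in \IC_2$ iff $\q_i^\top\M^*\q_i = 1$; applying the identity above with $\M = \M^*$ gives $\q_i^\top\M^*\q_i = \p_i^\top(\G^\top\M^*\G)\p_i = \p_i^\top\L^*\p_i$, which equals $1$ precisely when $i \in \IC_1$. Therefore $\IC_1 = \IC_2$, as claimed.

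I do not expect a genuine obstacle here: the entire argument reduces to the one-line congruence computation above. The only points requiring a little care are (i) confirming that the substitution preserves positive definiteness and the full-rank condition, so that both MVEEs are well defined, and (ii) invoking uniqueness of the MVEE so that the correspondence $\L^* \leftrightarrow \M^*$ is canonical rather than merely a statement that some optimizer maps to some optimizer. One could alternatively just cite the standard affine invariance of MVEEs (e.g.\ \cite{Boy04}) and omit the explicit derivation, but spelling out the two lines keeps the proof self-contained.
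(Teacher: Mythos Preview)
Your proof is correct and follows exactly the approach of the paper: both establish the congruence relation $\L^* = \G^\top \M^* \G$ between the optimal solutions and read off equality of the active sets from it. The paper's version is much terser, asserting this relation in one line (implicitly relying on the affine invariance cited from \cite{Boy04}), whereas you have spelled out the feasibility bijection, the constant shift in the objective, and the rank preservation that justify it.
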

\begin{proof}
 The origin-centered MVEEs for each of sets $\SC$ and $\TC$ 
 are given by the optimal solutions of 
 problems $\MVEEProb(\SC)$ and $\MVEEProb(\TC)$.
 We denote by $\L^* \in \Real^{r \times r}$ and $\M^* \in \Real^{r \times r}$ 
 the optimal solutions of $\MVEEProb(\SC)$ and $\MVEEProb(\TC)$.
 Then, we have $\L^* = \G^\top\M^*\G$.
 Therefore, the active point set of 
 origin-centered MVEE for $\SC$ coincides with that for $\TC$.
\end{proof}

The proof of Theorem \ref{Theo: Relation of NCER and ER} is now obtained.
\begin{proof}[Proof of Theorem \ref{Theo: Relation of NCER and ER}.]
It follows from the above discussion and Lemma \ref{Lemm: Invariance of active point set}.
\end{proof}

The step 3 of NCER and also the step 4 of ER select $r$ elements 
from the active point sets, and construct the index sets of the elements.
Although the two index sets are different in general,
those coincide if we modify one step of ER.
In the step 1, we set $\B$ as $\V_r^\top$ instead of $\Sigma_r\V_r^\top$.
We denote the modified version of ER by {\it MER}.
We immediately have the following corollary.

\begin{coro} \label{Coro: Relation of NCER and MER}
 Assume that 
 Assumptions \ref{Assump: Data points}, \ref{Assump: NCER}, and \ref{Assump: ER} hold.
 Let $\JC_1$ be the set constructed in the step 3 of NCER, and 
 $\JC_2$ be that in the step 4 of MER.  Then, we have $\JC_1 = \JC_2$.
\end{coro}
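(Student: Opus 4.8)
The plan is to trace the two algorithms step by step and check that, once the modification $\B = \V_r^\top$ is adopted, MER operates on literally the same vectors as NCER from its second step onward, so that the remaining steps are a deterministic function of a common input and therefore return the same set.

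First I would record what MER computes under Assumptions \ref{Assump: Data points}, \ref{Assump: NCER}, and \ref{Assump: ER}. Exactly as in the proof of Theorem \ref{Theo: Relation of NCER and ER}, the input matrix $\A\D^{-1/2}$ to MER has the SVD (\ref{Eq: SVD for scaled A}), so step~1 of MER now constructs $\B = \V_r^\top$ (in place of $\Sigma_r\V_r^\top$), where $\V_r = (\v_1, \ldots, \v_r)$ collects the $r$ right singular vectors of $\A\D^{-1/2}$ associated with the $r$ largest singular values. Choosing $\v_1$ as $\tau\D^{1/2}\e$ as in Proposition \ref{Prop: Vecotrs-lie-on-hyperplane}, the first entry of the $i$th column $\b_i$ of $\B$ equals $\tau d_i^{1/2} \neq 0$, so $\B$ has no zero column to remove and the hyperplane $\{\x \in \Real^r : \e_1^\top\x = \tau\}$ is a legitimate choice in step~2, whose associated scaling is precisely $s_i = \tau / (\e_1^\top\b_i) = d_i^{-1/2}$, i.e. $\S = \D^{-1/2}$, matching Assumption \ref{Assump: ER}(c). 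Hence the column vectors of $\B\S = \V_r^\top\D^{-1/2}$ produced in step~2 of MER coincide with the vectors $\p_1, \ldots, \p_m$ of (\ref{Eq: P}) produced in step~1 of NCER.

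Next, since step~3 of MER and step~2 of NCER now both compute the origin-centered MVEE of the identical set $\SC = \{\pm\p_1, \ldots, \pm\p_m\}$, their active index sets are literally equal; in particular this common set is $\IC_1$ of Theorem \ref{Theo: Relation of NCER and ER}, and one also recovers the identity $\IC_1 = \IC_2$ of that theorem because replacing $\V_r^\top$ by $\Sigma_r\V_r^\top$ applies the nonsingular transformation $\Sigma_r$, under which active point sets are invariant by Lemma \ref{Lemm: Invariance of active point set}. Finally, step~3 of NCER and step~4 of MER apply the same rule to the same data: if the common active index set has cardinality $r$, both set $\JC$ equal to it; otherwise both run SPA on the candidate vectors $\{\p_i : i \in \IC_1\}$ to pick $r$ indices, and since SPA is a deterministic greedy procedure fed identical input, it returns the same selection. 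Therefore $\JC_1 = \JC_2$.

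The computations are all routine; the one point that needs care is the bookkeeping in step~2 of MER — verifying that the zero-column removal is vacuous and that the scaling dictated by Assumption \ref{Assump: ER}(c) is exactly the one induced by the hyperplane normalization of Proposition \ref{Prop: Vecotrs-lie-on-hyperplane} — together with the elementary but essential observation that SPA is deterministic, so that equality of inputs forces equality of outputs.
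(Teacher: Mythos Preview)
Your proposal is correct and follows exactly the reasoning the paper intends: once $\B$ is replaced by $\V_r^\top$, the matrix $\B\S$ in MER becomes $\V_r^\top\D^{-1/2}=\P$, so from that point on NCER and MER operate on identical inputs and the deterministic steps (MVEE active set, then SPA) must agree. The paper treats this as immediate and gives no further argument, so your version is simply a more careful unpacking of the same idea, including the useful check that Assumption~\ref{Assump: ER}(c) is consistent with the hyperplane normalization and that no zero columns arise.
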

Accordingly, 
if we perform MER, and classify $\q_1, \ldots, \q_m$ after the step 4 
by following the step 4 of NCER,
then, the obtained clusters coincide with those by NCER under the three assumptions.
We will give the demonstration in Section \ref{Subsec: Connection of NCER with MER and ER}.

\begin{remark}
 Assumption \ref{Assump: ER}(a) requires us to perform a data scaling in ER.
 The same data scaling can be found in \cite{Xu03}.
 Empirical results in the paper show
 that the scaling can enhance the performance of NMF based clustering.
\end{remark}

\section{Issues in Clustering by  K-means and NMF}
\label{Sec: Issues}

We recall the algorithms in K-means and NMF clustering, 
and then, mention the issues about the choice of initial points.
An empirical study will be reported to confirm the issues 
in Subsection \ref{Subsec: Performance evaluation}.
Given a data set $\SC$ and an integer number $r$.
Let $\a$ denote a data point in $\SC$.
In K-means, we set the function for the disjoint partitions $\SC_1, \ldots, \SC_r$ 
of $\SC$,
\begin{equation*}
 f(\SC_1, \ldots, \SC_r)  = \sum_{j=1}^r \sum_{\a \in \SC_j} ||\a - \c_j||_2^2
\end{equation*} 
where
\begin{equation} \label{Eq: Center}
 \c_j = \frac{\sum_{\a \in \SC_j}\a}{|\SC_j|}
\end{equation}
and consider the problem of minimizing the function $f$.
The $\c_j$ serves as a center of the partition $\SC_j$.
Although, conventionally, the number of partitions is denoted by $K$ in K-means,
we continue to use $r$ for consistency in this paper.
The minimization problem is hard to solve, and in fact, 
has been shown to be NP-hard in \cite{Dri04, Alo09}.
Thus, instead of the global optimal solution, 
we find the local solution by using an alternative procedure.
We minimize $f$ by alternatively fixing centers $\c_1, \ldots, \c_r$ and partitions $\SC_1, \ldots, \SC_r$.
When $\c_1, \ldots, \c_r$ are fixed, 
a data point $\a$ is assigned to $\SC_{j}$ having the nearest center $\c_{j}$
such that the index $j$ attains the minimum value of $||\a - \c_j||_2^2$ 
for $j = 1, \ldots, r$.
When $\SC_1, \ldots, \SC_r$ are fixed,
$\c_j$ is computed by following (\ref{Eq: Center}).
The K-means algorithm repeats the procedure until some stopping criteria are satisfied.
To start the alternative procedure,
it requires us to arbitrarily choose $\c_1, \ldots, \c_r$ and fix them in advance.
It is empirically known that the choice of initial centers is sensitive 
to the cluster construction;
some choices may make it possible to show good clustering performance, 
while others may not.
It is difficult to choose good initial points before running the algorithm
and the choice affects the cluster construction.
NC has the same issues since K-means is incorporated in it.

Given a $d$-by-$m$ nonnegative matrix $\A$ and an integer number $r$.
The column vectors $\a_i$ of $\A$ correspond to data points.
In NMF, we find a $d$-by-$r$ nonnegative matrix $\F$ 
and an $r$-by-$m$ nonnegative matrix $\W$ such that the product of $\F$ and $\W$ 
is as close to $\A$ as possible.
A natural way for the formulation is that we set the function for matrices 
$\F \in \Real^{d \times r}$ and $\W \in \Real^{r \times m}$
\begin{equation} \label{Eq: Func for NMF}
 f(\F, \W) = ||\F\W - \A||_F^2
\end{equation}
and consider the problem of minimizing the function $f$ under the nonnegativity 
constraints on $\F$ and $\W$.
Solving the problem is hard, and has been shown to be NP-hard in \cite{Vav09}.
However, the intractable problem can be reduced into 
tractable subproblems if either of $\F$ or $\W$ is fixed.
Each subproblem becomes a easily solvable NLS problem.
The NMF algorithm repeats to solve 
NLS problems by alternatively fixing $\F$ and $\W$ so as to minimize $f$
until some stopping criteria are satisfied.
This alternative procedure can be regarded as a BCD framework which 
is used for minimizing a nonlinear function; 
see \cite{Kim14} for the details of the framework.
After that, the algorithm classifies data points $\a_i$ 
to $r$ clusters by using the obtained $\F$ and $\W$.
The assignment is similar to the step 4 of NCER.
We regard the column vectors of $\F$ as the representative points of clusters, 
and the elements of column vectors of $\W$ as a contribution rate.
For a column vector $\w_i$,
we find the index $j$ of the largest element in $\w_i$, 
and assign a data point $\a_i$ to the $j$th cluster $\SC_j$.
The NMF algorithm employs a BCD framework,
and it requires us to arbitrarily choose $\F$ (or $\W$) and fix it in advance.
As is the case in K-means,
the choice of initial matrix is sensitive to the cluster construction.

There are many studies on the efficient implementation of NMF algorithm.
The main discussion points are in how to solve NLS problems.
Although various types of algorithms are proposed,
the multiplicative update algorithm of \cite{Lee99} may be the most popular one.
Empirical results in \cite{Lin07} and \cite{Kim08a} imply
that a projected gradient algorithm and an active set algorithm are faster 
and provide more accurate solutions than a multiplicative update.
We refer the reader to \cite{Kim14} for further discussion.

We finally mention the GNMF algorithm proposed in \cite{Cai11}. 
Empirical results in the paper imply that GNMF outperforms NC and NMF in clustering.
This is a variant of the NMF algorithm.
In GNMF, we add the regularization term $R$ to the function $f$ of (\ref{Eq: Func for NMF}), 
and set the function
\begin{equation} \label{Eq: Func for GNMF}
 f(\F, \W) =  ||\F\W - \A||_F^2 + \lambda R(\W)
\end{equation}
where 
\begin{eqnarray*}
 R(\W) 
 &=& \frac{1}{2} \sum_{j=1}^m \sum_{i=1}^m k_{ij} ||\w_i - \w_j||_2^2 \\
 &=& \mbox{tr}(\W^\top \L \W).
\end{eqnarray*}
The problem we consider is to minimize the function $f$ 
under the nonnegative constraints on $\F$ and $\W$.
Here, $k_{ij}$ are the elements of adjacency matrix $\K$ which is constructed 
for data points $\a_1, \ldots, \a_m$, and $\L$ is the corresponding graph Laplacian.
Also, $\lambda$ is a parameter and takes a nonnegative value.
The term $R$ is called a graph regularizer.
We interpret $\W$ as a dimensionally reduced data matrix for $\A$.
If data points $\a_i$ and $\a_j$ are similar to each other, 
the term forces the dimensionally reduced ones $\w_i$ and $\w_j$ to be similar.
The GNMF algorithm minimizes $f$ by using a BCD framework,
and solves subproblems by using an algorithm 
based on the notion of the multiplicative update.
After that, it applies K-means to the column vectors $\w_1, \ldots, \w_m$ of $\W$, 
and finds $r$ clusters.
Accordingly, there are two parts in GNMF that require us to set initial points in advance.

\section{Related Work on Connection of Spectral Clustering and NMF}
\label{Sec: Related work}

Some of previous studies discuss the relationship between spectral clustering and NMF.
In particular, the paper \cite{Din05} points out that a similarity can be found 
in the problem formulations in spectral clustering and NMF.
We again consider the normalized cut minimization problem $\OriProb$.
Since the matrix variable $\H$ are nonnegative, 
we construct the following relaxation problem by taking account of it.
\begin{equation*}
  \begin{array}{cccc}
   \mbox{minimize} & \mbox{tr}(\H^\top \L \H)  & \mbox{subject to}  
    &  \H^\top\D\H = \I \ \mbox{and} \ \H \ge \0.
  \end{array}
\end{equation*}
Under the change of variable $\G = \D^{1/2}\H$,
this is equivalent to 
\begin{equation*}
  \begin{array}{cccc}
   \mbox{minimize} & \mbox{tr}(\G^\top \D^{-1/2}  \L \D^{-1/2} \G)  & \mbox{subject to}  
    &  \G^\top \G = \I \ \mbox{and} \ \G \ge \0
  \end{array}
\end{equation*}
since $\D$ is a diagonal matrix such that the diagonal elements are all positive.
Furthermore, 
the object function is $\mbox{tr}(\I) - \mbox{tr}(\G^\top \widebar{\W} \G)$ and we have 
$\mbox{tr}(\I) -2\mbox{tr}(\G^\top \widebar{\W} \G) + \mbox{tr}(\widebar{\W}^\top \widebar{\W}) 
= ||\G\G^\top - \widebar{\W}||_F^2$.
We here let $\widebar{\W} = \D^{-1/2} \W \D^{-1/2}$.
Thus, the above relaxation problem is essentially equivalent to
\begin{equation*}
 \begin{array}{cccc}
  \mbox{minimize} & ||\G \G^\top  - \widebar{\W}||_F^2  & \mbox{subject to} 
   &  \G^\top \G = \I \ \mbox{and} \ \G \ge \0.
 \end{array}
\end{equation*}
Hence, if we drop the orthogonal constraint $\G^\top \G = \I$ from there,
then, it can be thought of as the special case of NMF 
since $\widebar{\W}$ is a nonnegative matrix.
However, to the best of the author's knowledge,
there may be no work to rigorously discuss the similarity of algorithms 
for spectral clustering and NMF.

\section{Experiments}
\label{Sec: Experiments}

We will show an empirical study 
to investigate the performance of NCER.
The first experiments visualize the products obtained from NCER 
for a small data set.
In the experiments, we will see the relationship 
between the performance of NCER and Observation~\ref{Obs: Dist of eigenvec}.
The second experiments compare the performance of NCER with existing algorithms.
The third experiments display how the performance of NCER varies with the neighbor number $p$.
All experiments were conducted on a 3.2 GHz CPU processor and 12 GB memory.

\subsection{Algorithm Implementation, Data Sets, and Measurements}
\label{Subsec: Implementation, data sets, and measurements}

All algorithms used in the experiments were implemented on MATLAB.
We describe the implementation details as follows.

\begin{itemize}
 \item {\bf NCER.}
       Three computation require to be carried out:
       eigenvalue and eigenvector computation, 
       MVEE computation, and NLS problem solving.
       We used MATLAB commands {\tt eigs} and {\tt lsqnonneg} 
       for the first and third computation.
       The {\tt lsqnonneg} employs the active set algorithm for an NLS problem.
       For the second computation,
       we performed the interior-point algorithm in a cutting plane framework.
       It has been shown empirically in \cite{Sun04, Ahi08}
       that the use of cutting plane accelerates the efficiency of 
       interior-point algorithm for an MVEE problem,
       and makes it possible to handle large problems.
       We used the software package SDPT3 \cite{Toh99b}
       to perform the interior-point algorithm.

 \item {\bf NC.}
       In addition to eigenvalue and eigenvector computation, 
       K-means requires to be performed. 
       We used a MATLAB command {\tt kmeans} for performing it and the {\tt eigs} command.

 \item {\bf NMF.}
       NLS problems require to be solved 
       in the BCD framework for minimizing $f$ of (\ref{Eq: Func for NMF}).
       As mentioned in Section~\ref{Sec: Issues}, 
       there are various possibilities for the choice of algorithms for solving NLS problems.
       We used the code available from the author's website of \cite{Lin07}.
       The code employs the projected gradient algorithm for the problems, 
       and it tends to show better clustering performance than others.

 \item {\bf GNMF.}
       In addition to K-means, 
       the BCD framework for minimizing $f$ of (\ref{Eq: Func for GNMF}) requires to be performed.
       We used the code available from the first author's website of \cite{Cai11}
       for performing it and the {\tt kmeans} command. 

\end{itemize}

We used image data sets and document data in the experiments.
We give an explanation for the data sets as follows.

\begin{itemize}
 \item {\bf COIL20.}
       This is a data set of $20$ object images.
       The images are taken by turning the objects with $360$ degree rotation 
       in each $5$ degree interval.
       The data set consists of $72$ images per object, and
       contains a total of $1440$ images. 
       The size of images is $128$-by-$128$ pixels with $256$ grayscale intensities.
       The data set is available from the website 
       (\url{http://www.cs.columbia.edu/CAVE/software/softlib/coil-20}).
       Although two types of data sets, processed and unprocessed ones, are provided,
       we used the processed version.

 \item {\bf JAFFE.}
       This is a data set of facial images of $10$ Japanese female models.
       It consists of $7$ different types of facial expressions per model, and 
       contains a total of $213$  images.
       The size of images is $256$-by-$256$ pixels with $256$ grayscale intensities.
       The data set is available from the website 
       (\url{http://www.kasrl.org/jaffe_info.html}).

 \item {\bf ORL.}
       This is a data set of facial images of $40$ human models.
       The images are taken by changing facial expressions and lightning.
       The data set consists of $10$ different types of facial images per model, 
       and contains a total of $400$ images.
       The size of images is $112$-by-$92$ pixels with $256$ grayscale intensities.
       The data set is available from the website 
       (\url{http://www.cl.cam.ac.uk/research/dtg/attarchive/facedatabase.html}).

 \item {\bf MNIST.}
       This is a data set of handwritten digits from $0$ to $9$.
       The data set is constructed by using some part of data sets 
       available from National Institute of Standards and Technology.
       The data set contains a total of $10000$ images.
       The size of images is $28$-by-$28$ pixels with $256$ grayscale intensities.
       The data set is available from the website 
       (\url{http://yann.lecun.com/exdb/mnist}).
       Although two types of data sets, 
       training data set and test data set, are provided,
       we used the test data set.

 \item {\bf USPS.}
       Along with MNIST, this is also a data set of handwritten digits from $0$ to $9$.
       The data set contains a total of $9298$ images.
       The size of images is $16$-by-$16$ pixels,
       and the grayscale intensities are scaled into the interval from $-1$ to $1$.
       The data set is available from the website 
       (\url{http://www.gaussianprocess.org/gpml/data}).

 \item {\bf ReutersTOP10.}
       This data set was constructed by using some part of Reuters-21578 corpus.
       The corpus contains 21578 news articles appeared on the Reuters newswire in 1987,
       and the articles are manually classified into 135 topic groups.
       The corpus size can be reduced into 8293 articles in 65 topic groups
       by discarding the articles belonging to multiple topics.
       We picked up the top 10 largest topic groups from the size-reduced corpus, 
       and constructed a data set by collecting all articles in the topic groups.
       The data set contained 7285 articles with 18933 words.
       We denote it by ReutersTOP10.
       The Reuters-21578 corpus is available from 
       the UCI Knowledge Discovery in Databases Archive
       (\url{http://kdd.ics.uci.edu}).
       The size-reduced corpus is available from the website 
       (\url{http://www.cad.zju.edu.cn/home/dengcai}).

 \item {\bf BBC.}
       This is a corpus of news articles appeared on 
       the BBC news website in 2004-2005.
       The news articles are chosen from five topic groups, and 
       are preprocessed by applying 
       stemming, stop-word removal, and low word frequency filtering.
       The data set contains 2225 articles with 9636 words, and 
       is available from the website 
       (\url{http://mlg.ucd.ie/datasets/bbc.html}).

\end{itemize}

We generated data matrices by using the above data sets.
Consider the case of a image data set 
such that it consists of $m$ grayscale images of $s$-by-$t$ pixels.
In this case, we vectorized each image data into an $(s \times t)$-dimensional vector, and 
constructed an $(s \times t)$-by-$m$ matrix by stacking the vectors on the columns.
The images in all the data sets except USPS have 256 grayscale intensities.
Thus, the element values of the data matrices ranged from $0$ to $255$.
For USPS data set, we shifted the element values of the data matrix by $1$ 
so as to range from $0$ to $2$.
Consider the case of a document data set 
such that it consists of $m$ document with $d$ words.
We constructed a $d$-by-$m$ matrix. 
The elements of the matrix represent the frequency of words appeared in a document, 
and the appearance frequency of words was evaluated by tf-idf weighting scheme.
For the details of the scheme, 
we refer the reader to Section 6.2 of \cite{Man08}.

On each data set,
we manually classified the data into groups under predefined criteria 
such that those become the disjoint partitions of the data set.
The groups manually constructed are referred to as {\it classes} in contrast to 
clusters returned by an algorithm.
In case of image data sets, 
the data were classified according to object, human model, or digit.
In case of document data sets, 
the data were classified according to topic group.
Table \ref{Tab: Data matrices} summarizes 
the type of data, size of data matrix, and number of classes in the data sets.

\begin{table}[h]
 \centering
 \caption{Data type, data matrix size, and the number of classes in the data sets.} 
 \label{Tab: Data matrices}
 \begin{tabular}{lccrrcr}
 \hline
   & Data type  & & \multicolumn{2}{c}{Matrix size}  & & $\#$ Classes  \\
   &            & & $d$ & $m$ & & \\
  \cline{2-2} \cline{4-5} \cline{7-7}
 COIL20    & Object image & & 16384 & 1440  & & 20 \\
  JAFFE    & Facial image & & 65536 & 213   & & 10 \\
  ORL      & Facial image & & 10304 & 400   & & 40 \\
  MNIST    & Digit image  & & 784   & 10000 & & 10 \\
  USPS     & Digit image  & & 256   & 9298  & & 10 \\
  ReutersTOP10  & News article & &  18933 & 7285 & & 10 \\
  BBC      & News article & & 9635 & 2225   & & 5 \\
 \hline
 \end{tabular}
\end{table}

Two measurements were used 
for the evaluation of clusters constructed by an algorithm.
One is accuracy (AC) and another is normalized mutual information (NMI).
Let $\Omega_1, \ldots, \Omega_r$ be classes for a data set, 
and $\CC_1, \ldots, \CC_r$ be clusters returned by an algorithm for the data set.
In AC, we compute the correspondence relationship between 
$\Omega_1, \ldots, \Omega_r$ and $\CC_1, \ldots, \CC_r$ 
to maximize the total number of common elements $|\Omega_i \cap \CC_j|$.
Such a correspondence can be obtained 
by solving an assignment problem.
The indices of the classes and clusters are reattached to 
follow the obtained correspondence order.
Then, AC is defined as
\begin{equation*}
\frac{1}{m}(|\Omega_1 \cap \CC_1| + \cdots + |\Omega_r \cap \CC_r|).
\end{equation*}
In NMI, we compute
the mutual information $I(\Omega, \CC)$ for $\Omega$ and $\CC$, 
and the entropies $H(\Omega)$ and $H(\CC)$ for each of $\Omega$ and $\CC$.
Here, $\Omega$ and $\CC$ denote $\{\Omega_1, \ldots, \Omega_r\}$ and $\{\CC_1, \ldots, \CC_r\}$.
Then, NMI is defined as
\begin{equation*}
 \frac{I(\Omega, \CC)}{\frac{1}{2}(H(\Omega) + H(\CC))}.
\end{equation*}
We refer the reader to Section 16.3 of \cite{Man08} 
for the precise definitions of mutual information and entropy.
Both measurements take the values ranging from 0 to 1.
If clusters and classes are similar to each other,
the values are close to one; otherwise, those are close to zero.

\subsection{Illustration to See the Relationship between 
Clustering Performance and Observation \ref{Obs: Dist of eigenvec}}
\label{Subsec: Illustration}

The clustering performance of NCER can be expected to be high 
if Observation~\ref{Obs: Dist of eigenvec} holds.
Experiments were conducted with the purpose of illustrating this.
We visualized the products produced by NCER on a small data set
to see whether Observation~\ref{Obs: Dist of eigenvec} holds or not.
The data set was constructed by picking image data
in three classes of MNIST corresponding to handwritten digits 4, 5, and 6.
The data set contained 2832 images of $28$-by-$28$ pixels.
We conducted NCER by using four different neighbor numbers $p$,
namely, $5, 944, 1888,$ and $2832$, chosen 
so as to divide the range from 0 to 2832 into three almost equal parts.
The other input parameters were set such that 
$r$ is $3$, and $k$ is the inner product of two data points.

Table \ref{Tab: Exp1} and Figure \ref{Fig: Exp1} display the experimental results.
The table summarizes the ACs and NMIs for the four neighbor numbers.
The figures may need some explanation.
The size of the data matrix is $784$-by-$2832$.
The vectors $\p_1, \ldots, \p_{2832}$ constructed in the step 1 of NCER
are $3$-dimensional vectors
due to $r=3$ and lie on a hyperplane $\HC = \{\x \in \Real^3 : \e_1^\top \x  = \tau\}$.
The MVEE for the set $\SC = \{\pm \p_1, \ldots, \pm \p_{2832}\}$ is $3$-dimensional and 
is centrally symmetric, having the origin as the center.
The figures show the intersections of 
$\p_1, \ldots, \p_{2832}$ and MVEE with $\HC$.
The figures display four cases for each neighbor number.
The colored points correspond to $\p_1, \ldots, \p_{2832}$, and 
red, blue and green respectively represent the three classes.
The ellipsoids surrounded by the black lines are the MVEEs, and 
the squares surrounds  active points.

\begin{figure}[h]
 \centering
\begin{minipage}{.48\linewidth}
 \includegraphics[width=\linewidth]{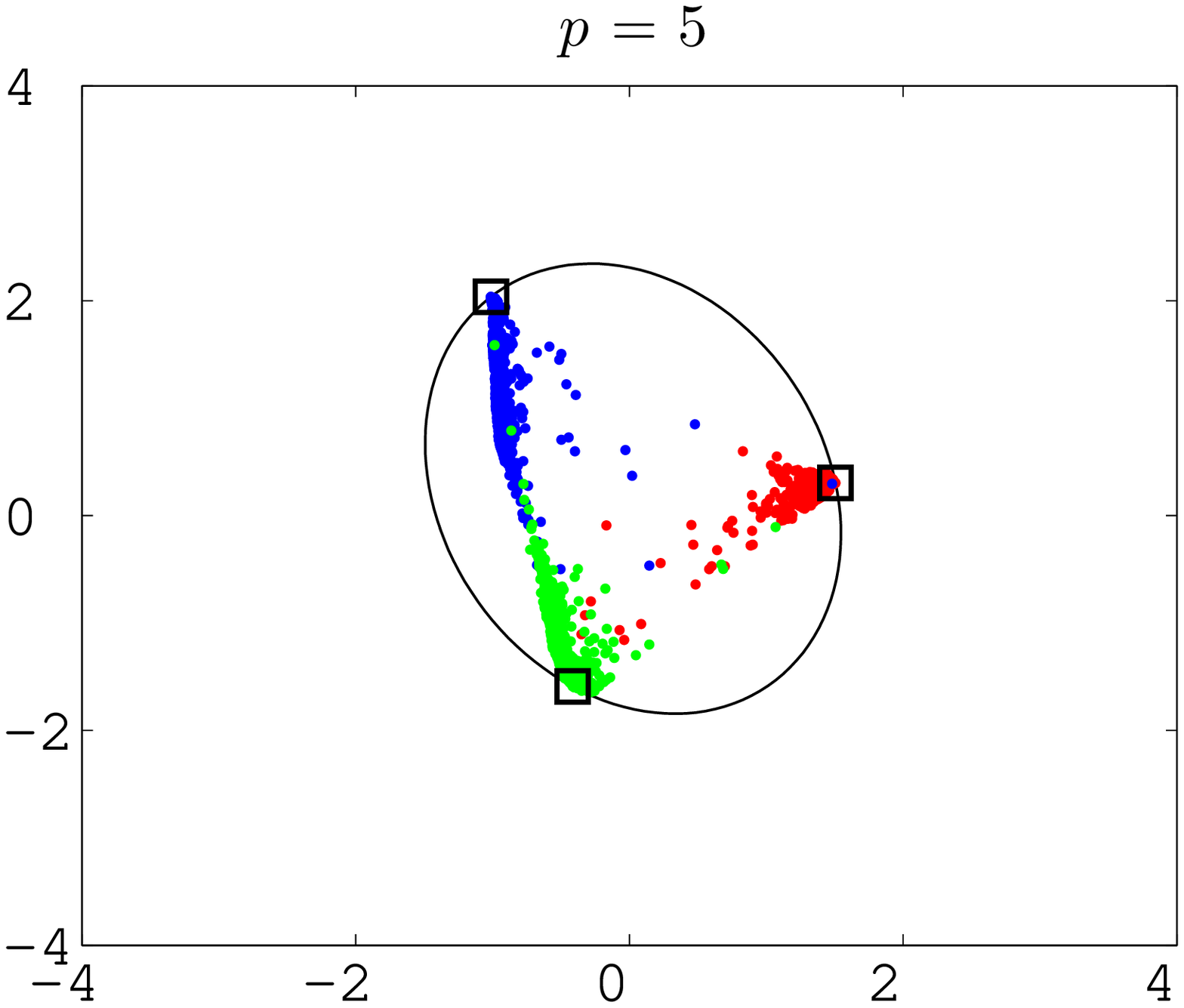}
\end{minipage}
\begin{minipage}{.48\linewidth}
 \includegraphics[width=\linewidth]{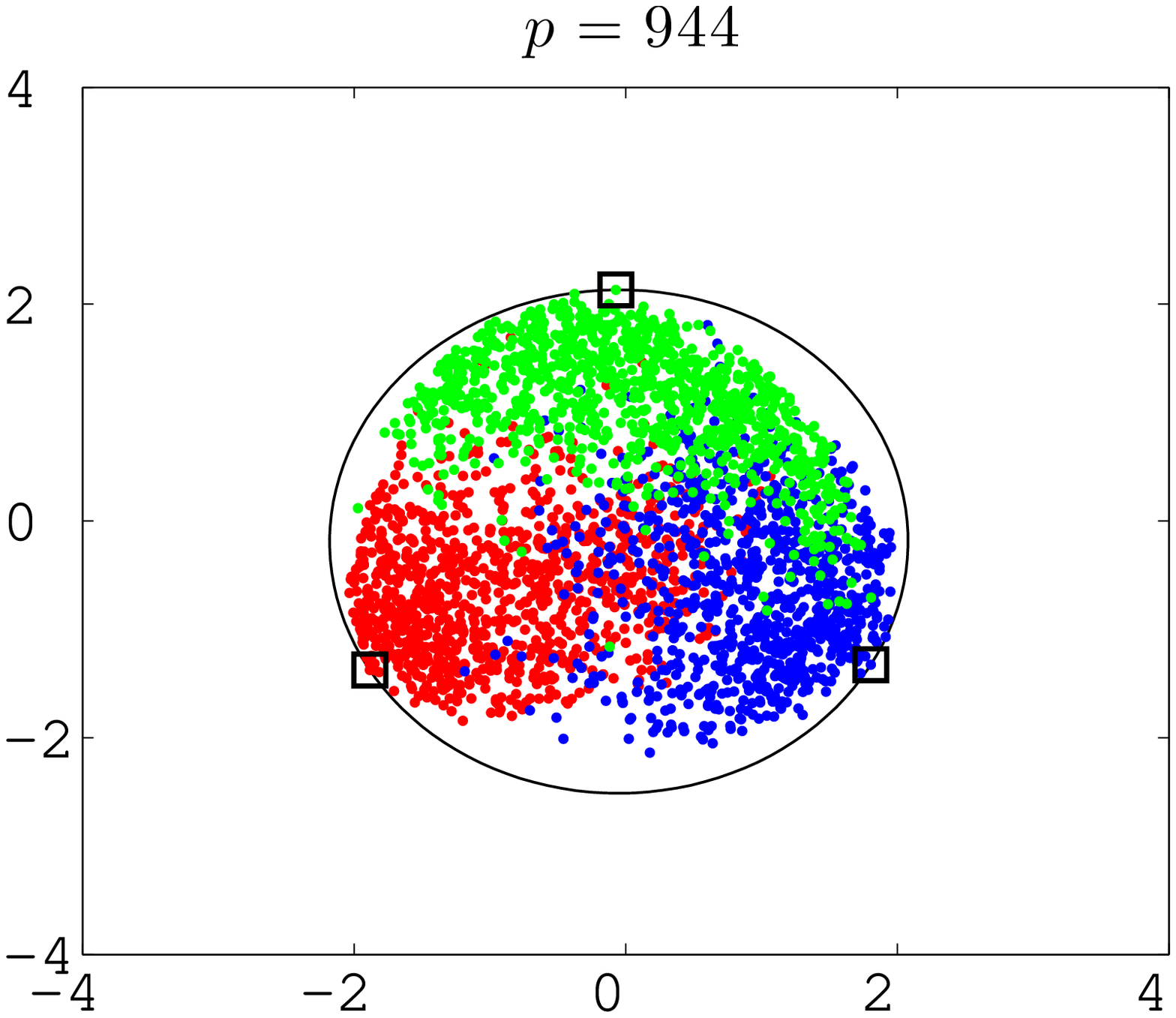}
\end{minipage}
\begin{minipage}{.48\linewidth}
 \includegraphics[width=\linewidth]{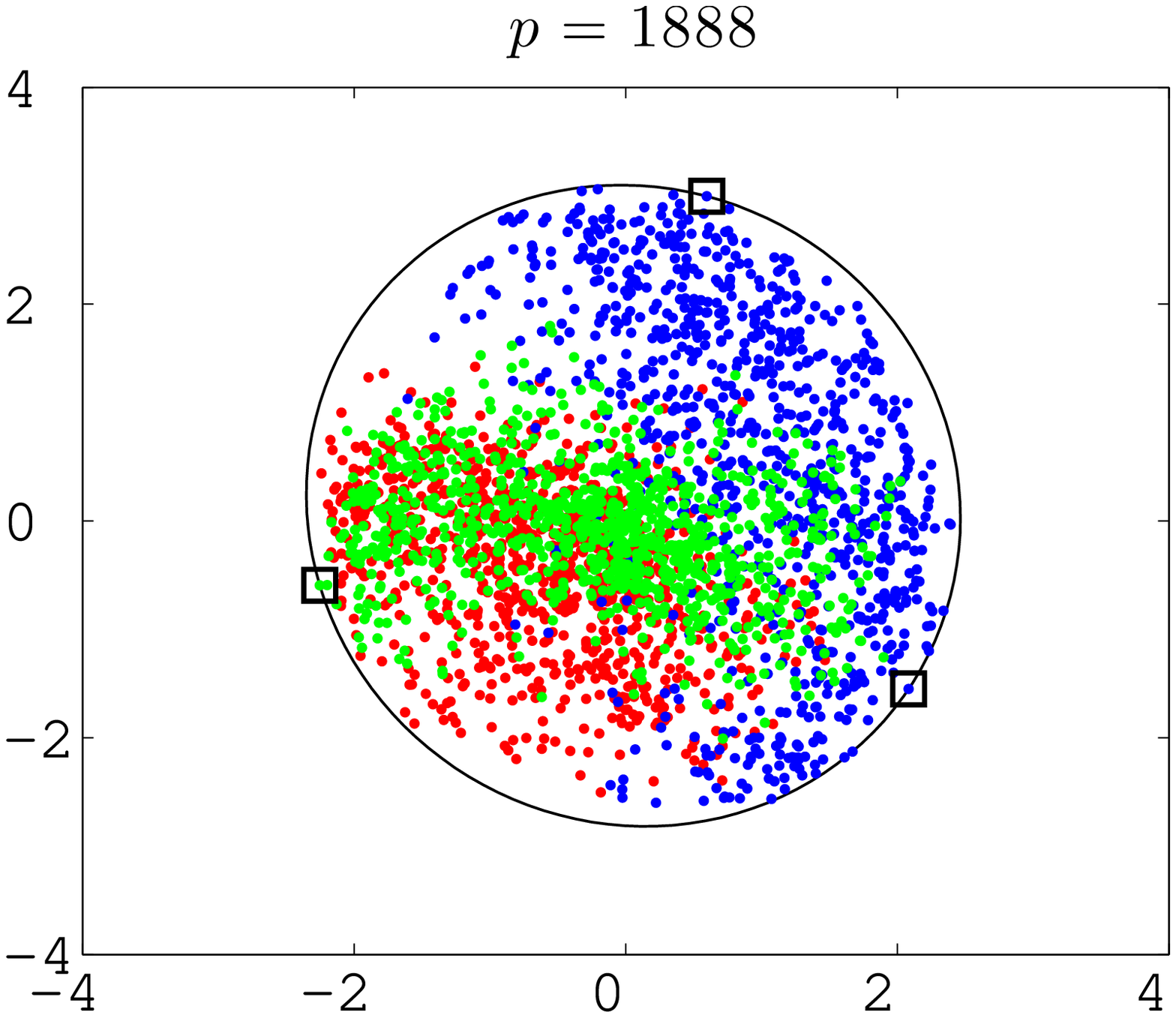}
\end{minipage}
\begin{minipage}{.48\linewidth}
 \includegraphics[width=\linewidth]{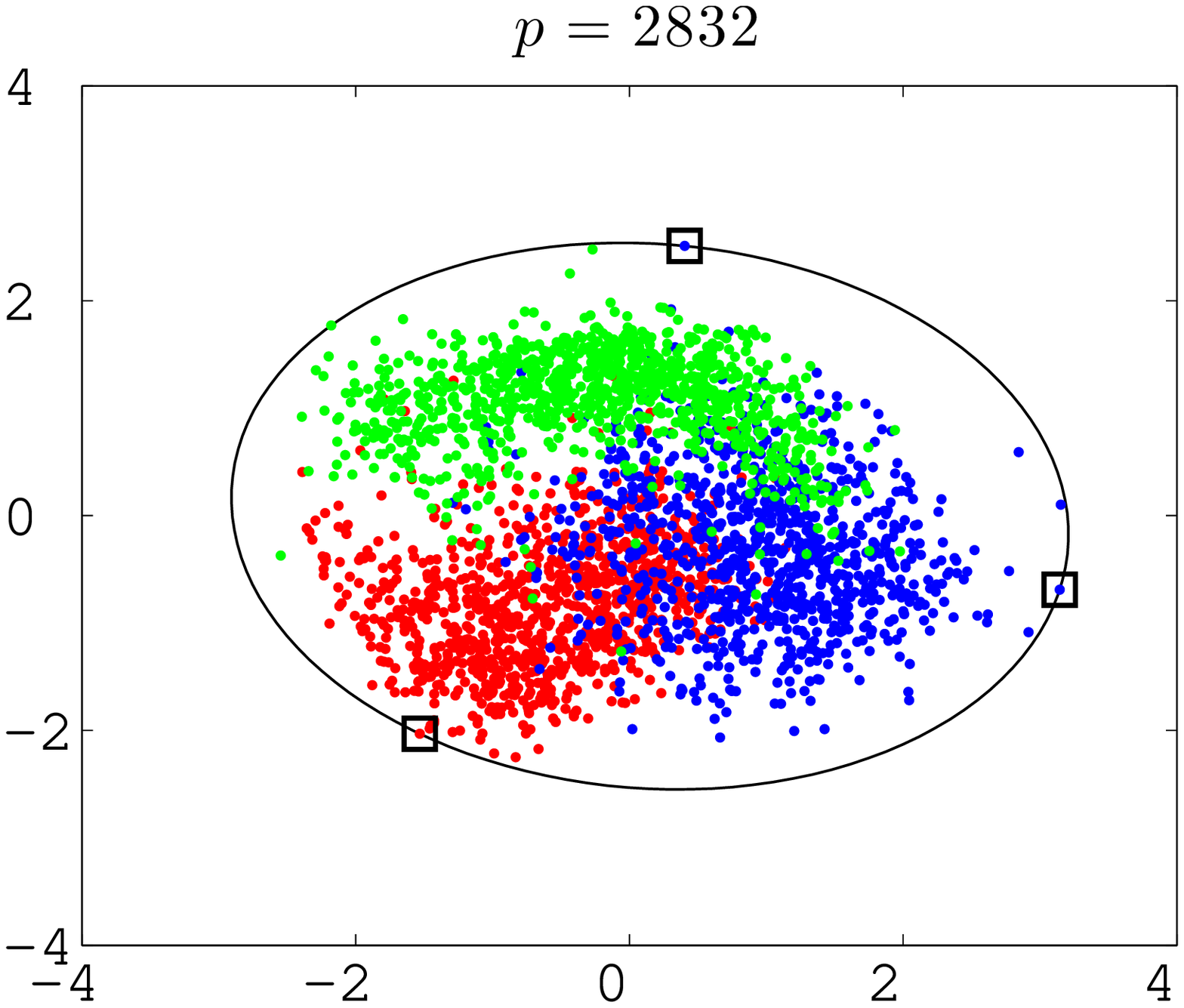}
\end{minipage}
\caption{Visualization of 
 $\p_1, \ldots, \p_{2832}$ and MVEEs on a hyperplane $\HC$.
 These were produced using NCER for a data set 
 consisting of three classes in MNIST
 and choosing four different neighbor numbers $p$.
 The red, blue, and green points correspond to 
 the three classes for the handwritten digits 4, 5, and 6.
 The ellipsoids surrounded by the black lines are 
 the intersections of MVEEs for the set $\SC = \{\pm \p_1, \ldots, \pm \p_{2832}\}$ 
 with $\HC$.
 The points in the squares are active points.
 The case of $p=5$ is at the top-left corner,
 that of $p=944$ at the top-right corner,
 $p=1888$ at the bottom-left corner, and 
 $p=2832$ at the bottom-right corner.}
 \label{Fig: Exp1}
\end{figure}

\begin{table}[h]
 \centering
 \caption{Cluster evaluation obtained by NCER
 for four different neighbor numbers $p$.}
 \label{Tab: Exp1}
\begin{tabular}{lcccc}
 \hline
     & $p=5$ & $p=944$ & $p=1888$ & $p=2832$ \\
 \cline{2-5}
 AC   & 0.987 & 0.829 & 0.546 & 0.799 \\
 NMI  & 0.934 & 0.496 & 0.258 & 0.460 \\
 \hline
\end{tabular}
\end{table}

We can say from the figures
that Observation~\ref{Obs: Dist of eigenvec} holds in the case of $p=5$.
We see from the table that the AC and NMI of this case are high.
Meanwhile, it is difficult to say that Observation~\ref{Obs: Dist of eigenvec} holds 
in the other cases.
In particular, the figure with $p=1888$
displays that three types of data points are mixed and not separated.
In fact, the AC and NMI of this case are low.
We see from the figures with $p=944$ and $p=2832$
that the data points group around three active points 
although some of the points are mixed.
Hence, the ACs and NMIs of these cases are higher than those of $p=1888$.

\subsection{Performance Evaluation} 
\label{Subsec: Performance evaluation}

As mentioned in Section~\ref{Sec: Issues},
the existing algorithms, NC, NMF, and GNMF, require
the initial points to be chosen before they are run, and the choice affects the clustering performance.
Hence, their performance may deteriorate 
with a choice of initial points which bring a low performance.
Meanwhile, NCER does not have such issues. 
Hence, we can expect that its performance will be stable and as high as that of NC.

Experiments were conducted in the purpose of comparing the performance of NCER 
with the existing algorithms.
In the experiments of the existing algorithms, 
we chose multiple initial points and measured the worst, best, and average performance.
We randomly generated 100 initial points for running the K-means algorithm in NC and GNMF
and 100 initial points for running the BCD framework
to minimize the functions of (\ref{Eq: Func for NMF}) and (\ref{Eq: Func for GNMF}) in NMF and GNMF. 
By using these initial points, 
we performed a total of 100 trials for each of NC and NMF 
and a total of 10000 trials for GNMF on each data set.
Meanwhile, we carried out one trial for NCER on each data set.
The parameters of NCER and NC were set such that 
$p$ is 5, $r$ is the number of classes in the data set, and 
$k$ is the inner product of two data points.
We used the COIL20, JAFFE, ORL, MNIST, and USPS image data sets.

\begin{table}[h]
 \centering
 \caption{Performance evaluation of the algorithms in AC.}
 \label{Tab: Ex2_AC}
\begin{tabular}{lcccccccccccccc}
 \hline
       & NCER &
       & \multicolumn{3}{c}{NC} & & \multicolumn{3}{c}{NMF} & & \multicolumn{3}{c}{GNMF} \\
       &      & & ave & min & max & & ave  & min  & max & & ave & min & max \\
 \cline{2-2} \cline{4-6} \cline{8-10} \cline{12-14}
COIL20 & \textbf{0.831} & & 0.585 & 0.386 & 0.753 & & 0.574 & 0.449 & 0.680 & & 0.681 & 0.465 & 0.817 \\
JAFFE  & \textbf{0.948} &  & 0.753 & 0.469 & 0.981 &  & 0.203 & 0.174 & 0.230 &  & 0.789 & 0.531 & 0.981 \\
ORL    & \textbf{0.792} &  & 0.667 & 0.570 & 0.745 &  & 0.595 & 0.515 & 0.662 &  & 0.645 & 0.555 & 0.700 \\
MNIST  & 0.663 &  & \textbf{0.664} & 0.500 & 0.810 &  & 0.522 & 0.429 & 0.532 &  & 0.632 & 0.449 & 0.727 \\
USPS   & \textbf{0.802} &  & 0.710 & 0.395 & 0.937 &  & 0.683 & 0.563 & 0.756  & & 0.750 & 0.289 & 0.940 \\
 \hline
\end{tabular}

 \caption{Performance evaluation of the algorithms in NMI.}
 \label{Tab: Ex2_NMI}
\begin{tabular}{lcccccccccccccc}
 \hline
       & NCER & 
       & \multicolumn{3}{c}{NC} & & \multicolumn{3}{c}{NMF} & & \multicolumn{3}{c}{GNMF} \\
       &       & & ave & min & max & & ave  & min  & max & & ave & min & max  \\
 \cline{2-2} \cline{4-6} \cline{8-10} \cline{12-14}
COIL20 & \textbf{0.933} &  & 0.813 & 0.695 & 0.887 & & 0.712 & 0.666 & 0.748  & & 0.857 & 0.774 & 0.919 \\
JAFFE  & \textbf{0.938} &  & 0.874 & 0.721 & 0.974  & & 0.109 & 0.068 & 0.141 & & 0.910 & 0.794 & 0.974 \\
ORL    & \textbf{0.875} &  & 0.837 & 0.787 & 0.869 &  & 0.788 & 0.759 & 0.811  & & 0.825 & 0.797 & 0.842 \\
MNIST  & \textbf{0.755} &  & 0.735 & 0.642 & 0.779  & & 0.460 & 0.426 & 0.467  & & 0.719 & 0.646 & 0.745  \\
USPS   & \textbf{0.834} &  & 0.800 & 0.622 & 0.877  & & 0.620 & 0.566 & 0.641  & & 0.813 & 0.574 & 0.881  \\
 \hline
 \end{tabular}

\end{table}

Tables \ref{Tab: Ex2_AC} and \ref{Tab: Ex2_NMI} summarize 
the ACs and NMIs of the algorithms on each data set.
Since multiple trials for NC, NMF, and GNMF were conducted,
the statistics of the measurements are shown.
The columns labeled ``ave'', ``min'', and ``max'' list the average, minimum, and maximum values
of the corresponding measurements.
For each data set,
we compared the AC and NMI of NCER with the average ACs and NMIs of NC, NMF, and GNMF, 
and the tables show the highest values in boldface.
We can see that 
the ACs and NMIs of NCER are higher than 
the average ACs and NMIs of the existing algorithms, except MNIST.
NC and GNMF outperform NMF.
For JAFFE and USPS,
the maximum ACs and NMIs of NC and GNMF are higher than the ACs and NMIs of NCER,
but their minimum ACs and NMIs are considerably lower than their maxima.
Hence, the averages get worse.
For COIL20 and ORL,
we can also see that 
NC and GNMF have gaps between the minimum and maximum values of ACs and NMIs.
For MNIST, although the AC of NCER is lower than the average AC of NC,
the difference is quite small.
Furthermore, the minimum AC of NC is lower than the AC of NCER.
This indicates that there exists an initial point such that NC is inferior to NCER in AC.
Consequently, we see that 
the performances of NC, NMF, and GNMF depend on the choice of the initial points and 
the average performance tends to get worse because some initial points result in poor performance.
Meanwhile, NCER is a stable clustering algorithm and has high performance.

Let us mention the computational times of algorithms.
The experiments showed that although the computational time of NCER is longer than that of NC,
the difference is not so large.
The biggest difference was in the case of USPS;
NCER spent 81.4 seconds, while NC spent 67.0 seconds on average.
It should be noted that the experimental results showed 
that the bottleneck in computational time is in solving the eigenvalue problem.

\subsection{Connection of NCER with MER and ER}
\label{Subsec: Connection of NCER with MER and ER}

Finally, experiments were conducted to see
how the performance of NCER varies with the neighbor number $p$.
We used the COIL20 and MNIST image data sets and 
the ReutersTOP10 and BBC document data sets.

We set the input parameters for NCER as follows.
$p$ were chosen so as to increase by some unit size $s$ 
such that $p \in \{5, s, 2s, \ldots, cs, m\}$.
Here, $s$ and $c$ are integers
such that $s$ is strictly greater than $5$ and $c$ satisfies $cs < m \le (c+1)s$.
The unit sizes $s$ of each data set were set as follows: 
$s=30$ on COIL20, $s=200$ on MNIST, $s=200$ on ReutersTOP10, and $s=50$ on BBC.
The other input parameters were set such that 
$r$ is the number of classes in the data set,
and $k$ is the inner product of two data points.
We set the parameters for MER and ER as follows.
Let $\D$ be the matrix of (\ref{Eq: Degree mat}).
The data matrix $\A$ was scaled to $\A\D^{-1/2}$, 
and the scaled data matrix was used as the input for the algorithms.
The matrix $\S$ in step 2 of the algorithms was set as $\D^{-1/2}$.
The input parameter $r$ was chosen as the number of classes in a data set.

None of the data sets contained any data that corresponded to a zero vector, 
and the data matrices $\A$ satisfied $r \le \mbox{rank}(\A)$.
Hence,  Assumptions  \ref{Assump: Data points}, \ref{Assump: NCER}, and \ref{Assump: ER} 
held when the neighbor number $p$ for NCER was set as $m$.
Accordingly, Theorem \ref{Theo: Relation of NCER and ER} 
and Corollary \ref{Coro: Relation of NCER and MER} ensured that, when $p=m$, 
the clusters returned by NCER would not be far from those returned by ER 
and would coincide with those returned by MER.

Figure \ref{Fig: Ex3} depicts the graphs 
for showing the ACs and NMIs of NCER, MER, and ER versus neighbor number $p$.
The red points connected by the red line plot the ACs and NMIs of NCER.
The blue and green lines are the ACs and NMIs of MER and ER.
Table \ref{Tab: Exp3} summarizes the ACs and NMIs of NCER with $p=m$, MER, and ER.
We can see from the figure and table that the ACs and NMIs of NCER coincide with those of MER
when $p=m$.

The graphs on the COIL20 and MNIST image data sets indicate
that the clustering performance of NCER increases as the neighbor number $p$ gets close to zero.
However, the graphs on the ReutersTOP10 and BBC document data sets indicate
that the performance of NCER deteriorates when $p$ is a small number close to zero.
This difference may have come from the differences 
in the degree of similarity in each class of the data sets.
The data in the same class are quite similar to each other 
in the image data sets.
For instance, regarding COIL20,
the image data in a class were taken by turning an object through some interval of degrees.
Thus, the image data in the same class may retain a similarity structure 
even as $p$ gets close to zero.
However, this may not be the case in the document data sets, 
since the data in the same class are not as similar to each other as those of the image data sets.

\begin{figure}[p]
 \centering

\begin{minipage}{.4\linewidth}
 \includegraphics[width=\linewidth]{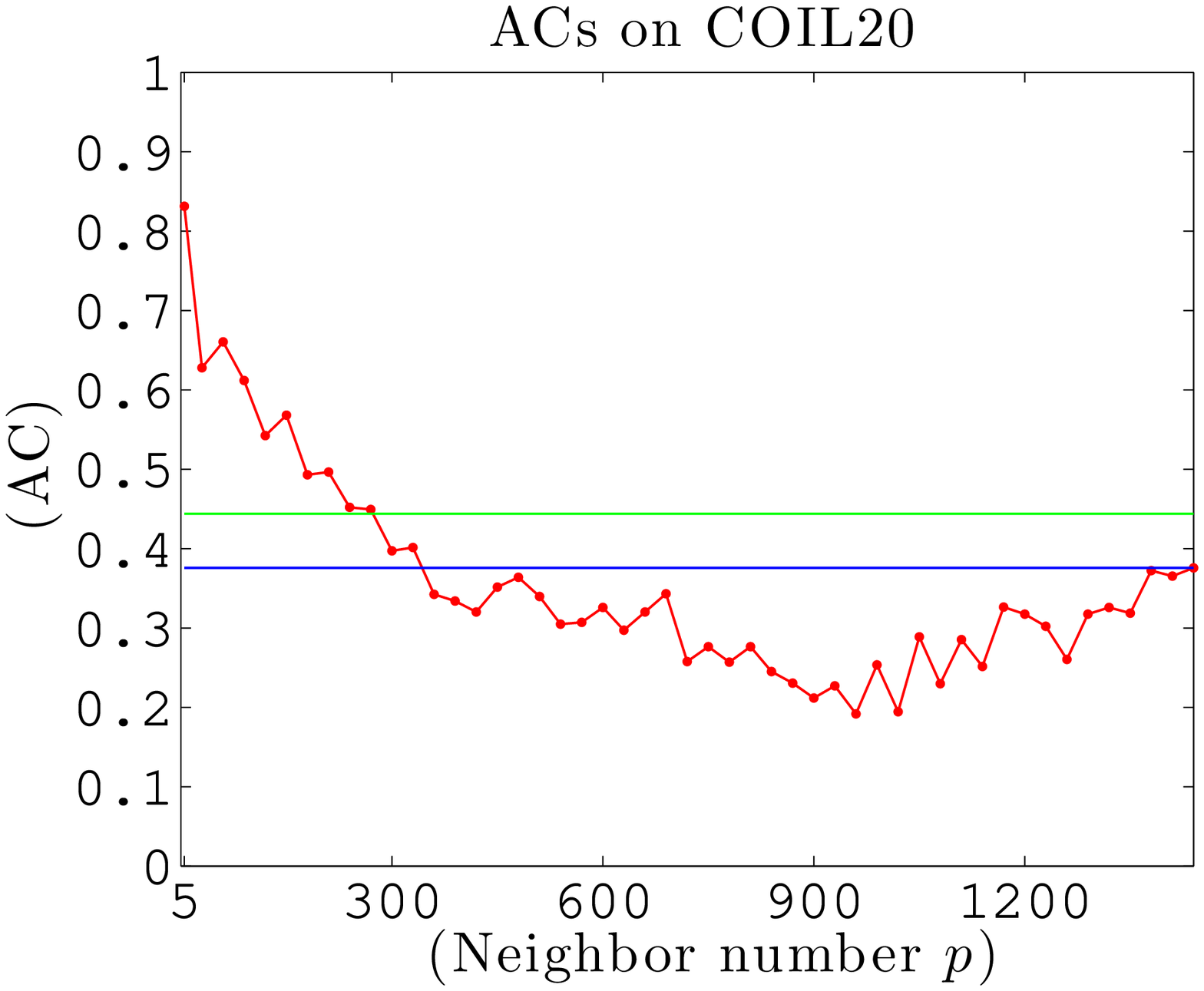}
\end{minipage}
\begin{minipage}{.4\linewidth}
 \includegraphics[width=\linewidth]{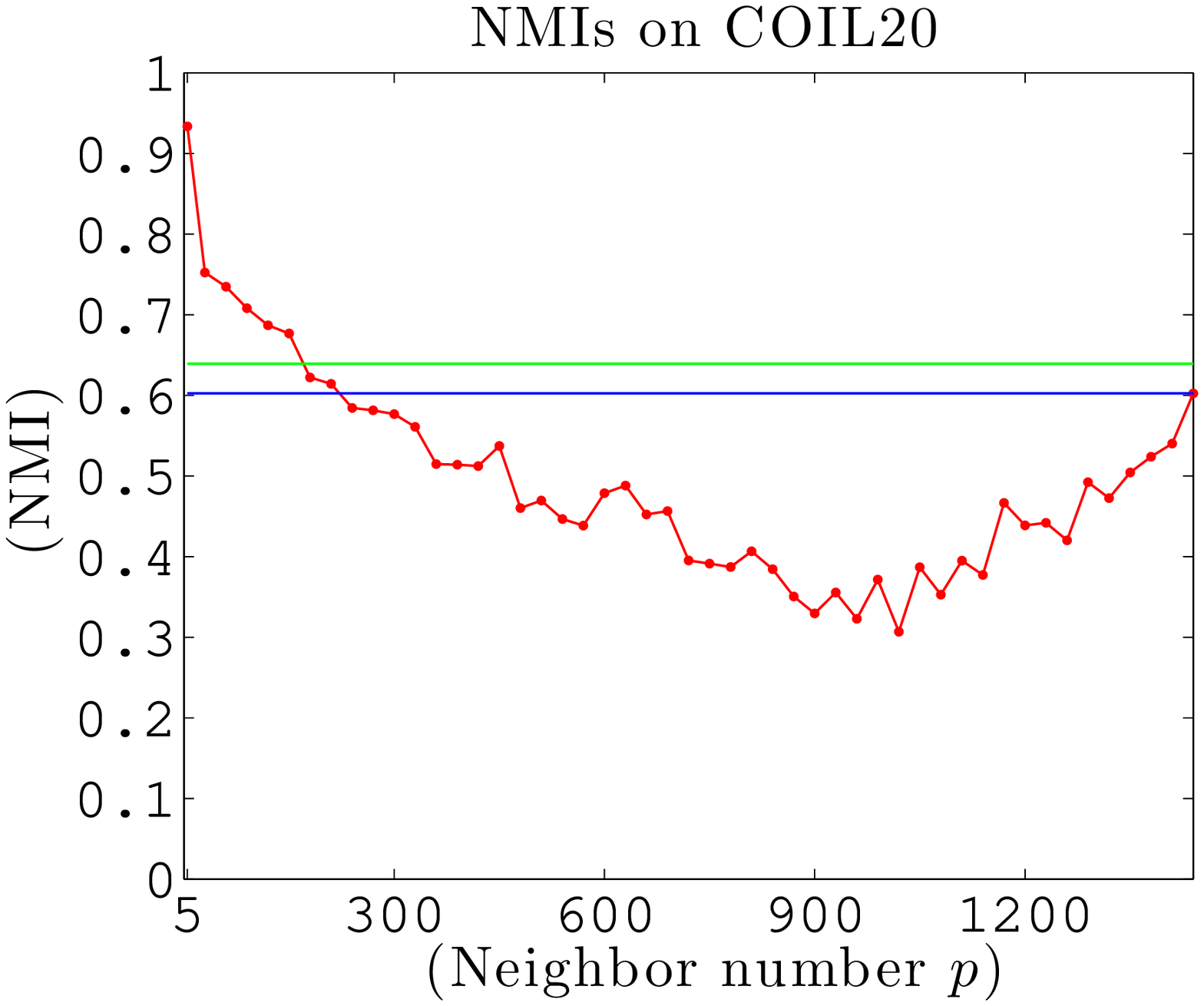}
\end{minipage}
\smallskip

\begin{minipage}{.4\linewidth}
 \includegraphics[width=\linewidth]{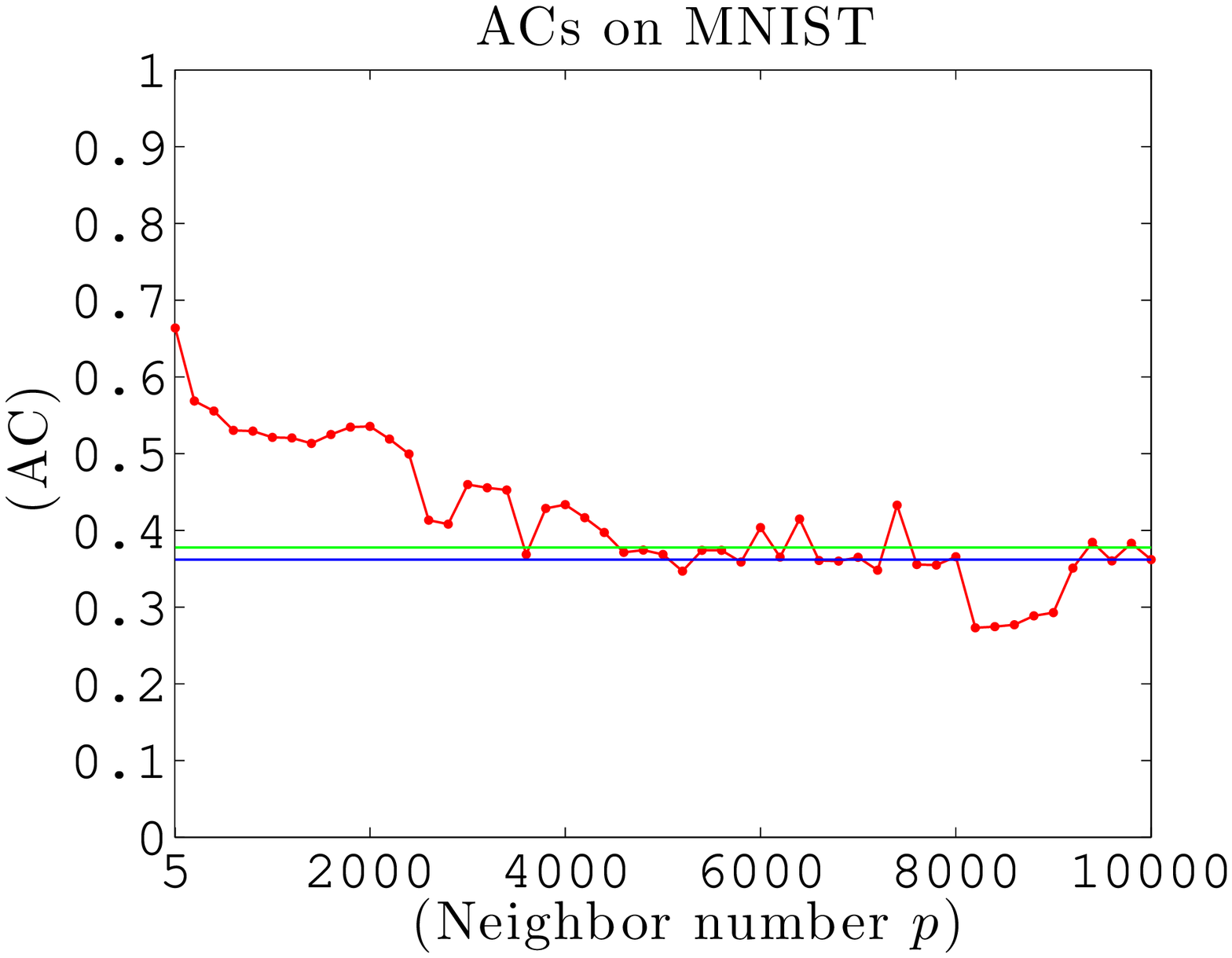}
\end{minipage}
\begin{minipage}{.4\linewidth}
 \includegraphics[width=\linewidth]{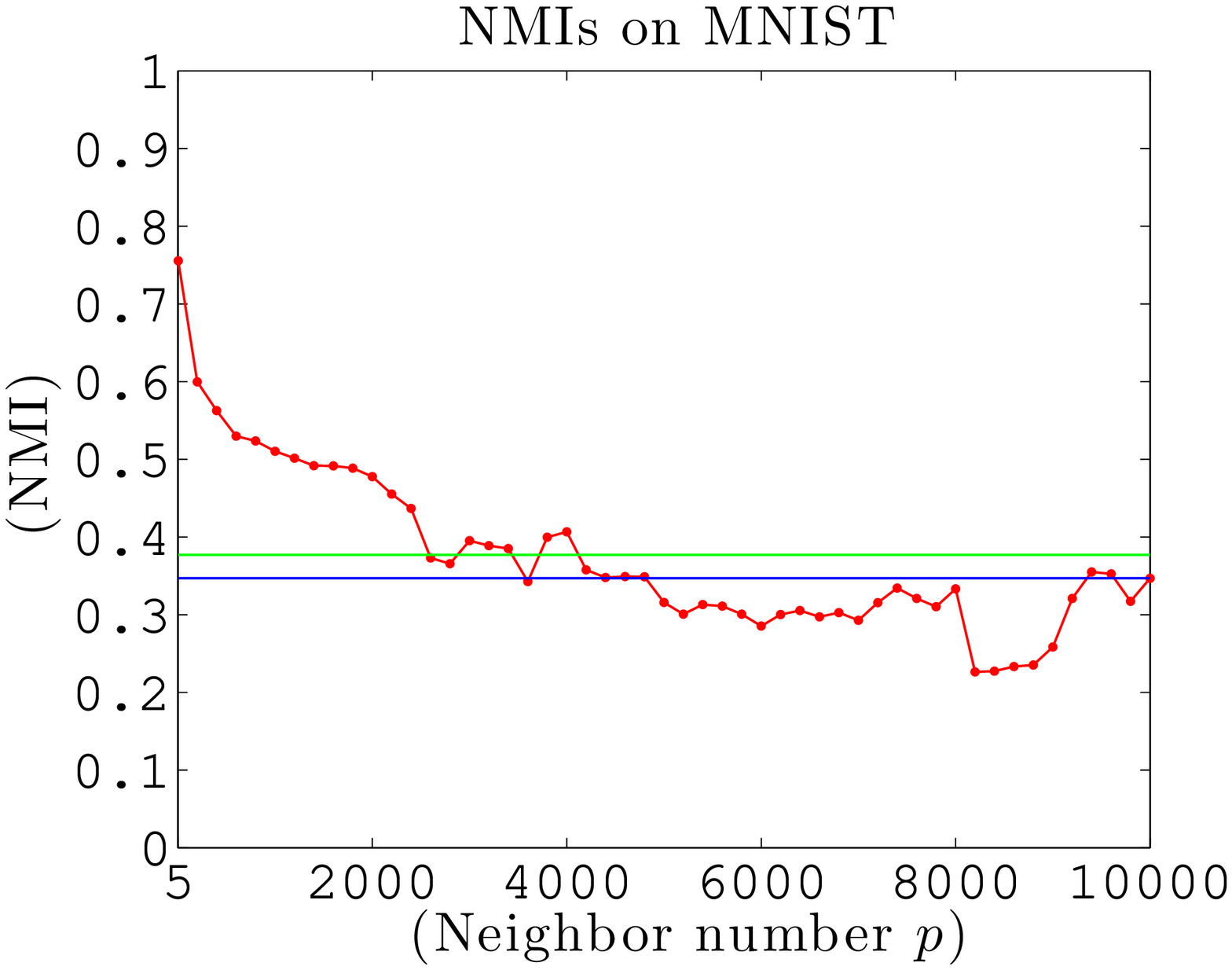}
\end{minipage}
 \smallskip

\begin{minipage}{.4\linewidth}
 \includegraphics[width=\linewidth]{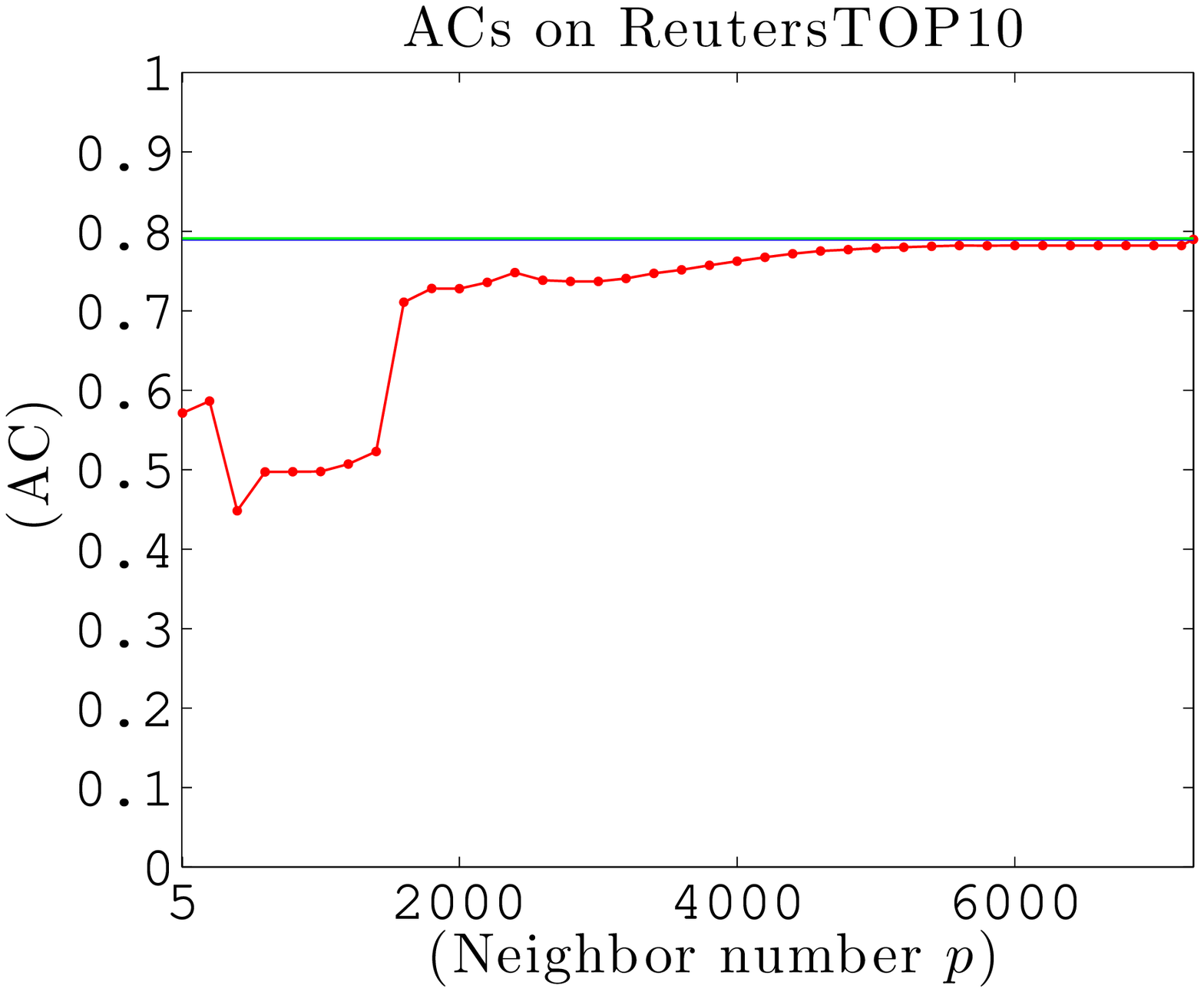}
\end{minipage}
\begin{minipage}{.4\linewidth}
 \includegraphics[width=\linewidth]{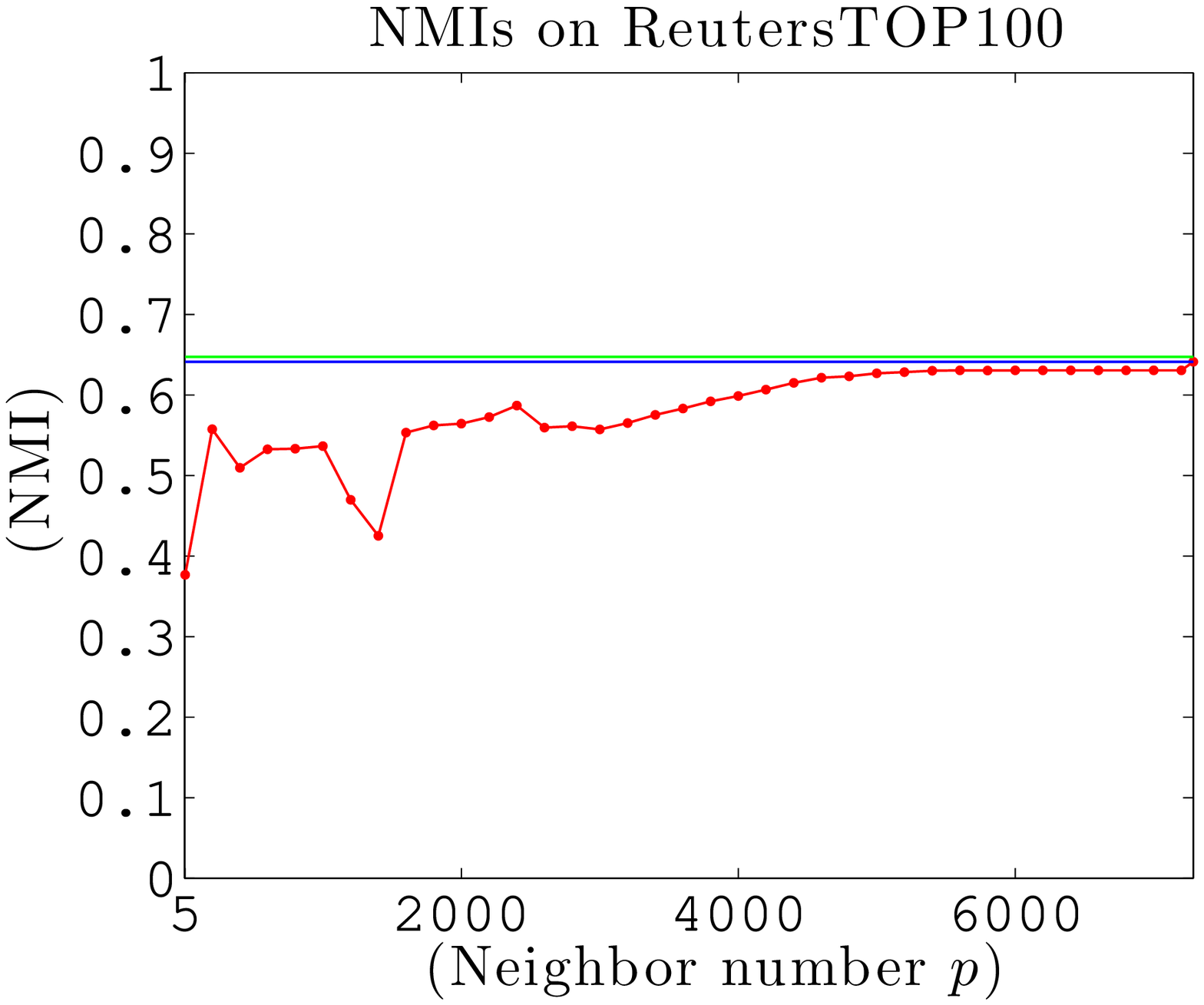}
\end{minipage}
\smallskip

\begin{minipage}{.4\linewidth}
 \includegraphics[width=\linewidth]{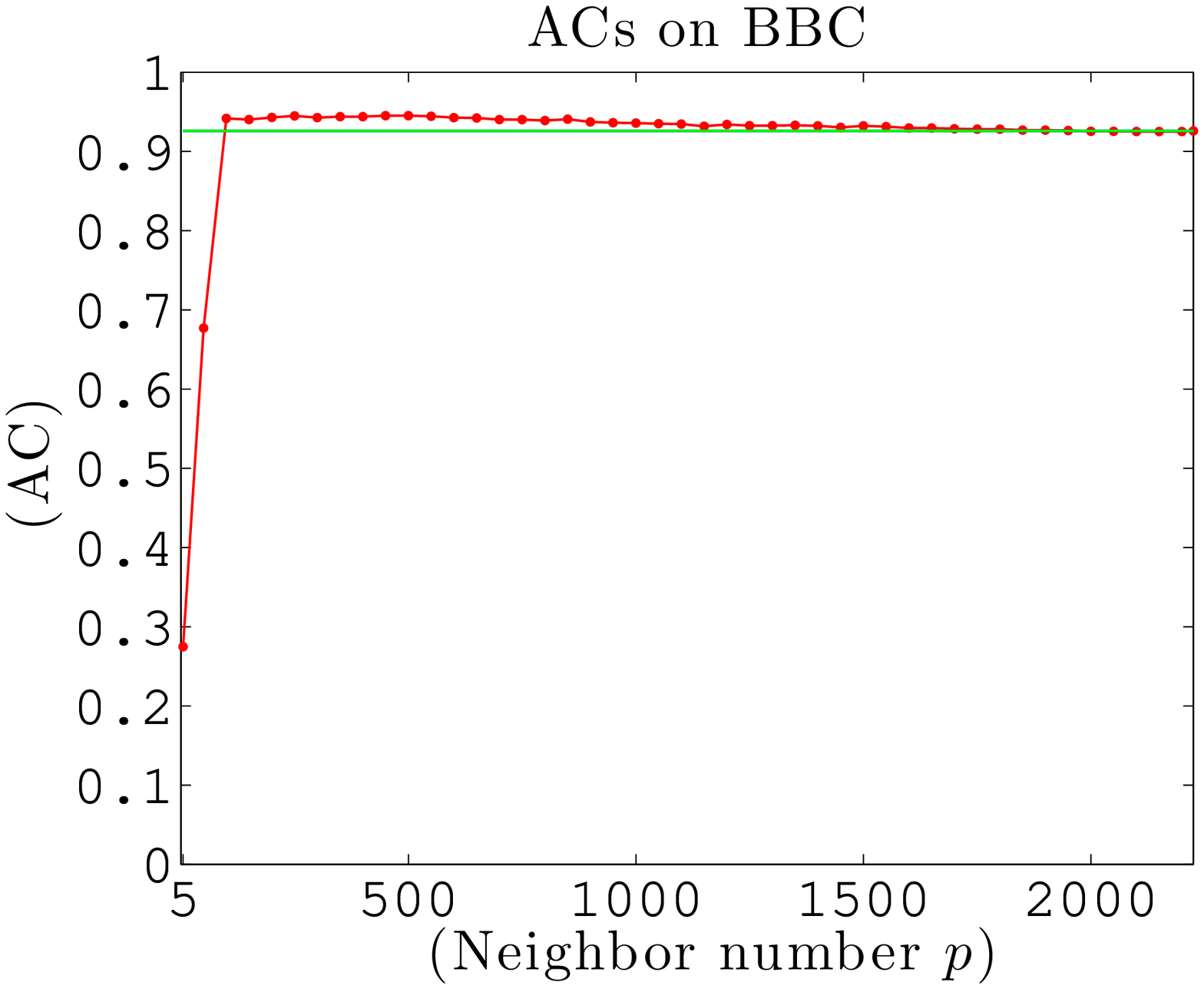}
\end{minipage}
\begin{minipage}{.4\linewidth}
 \includegraphics[width=\linewidth]{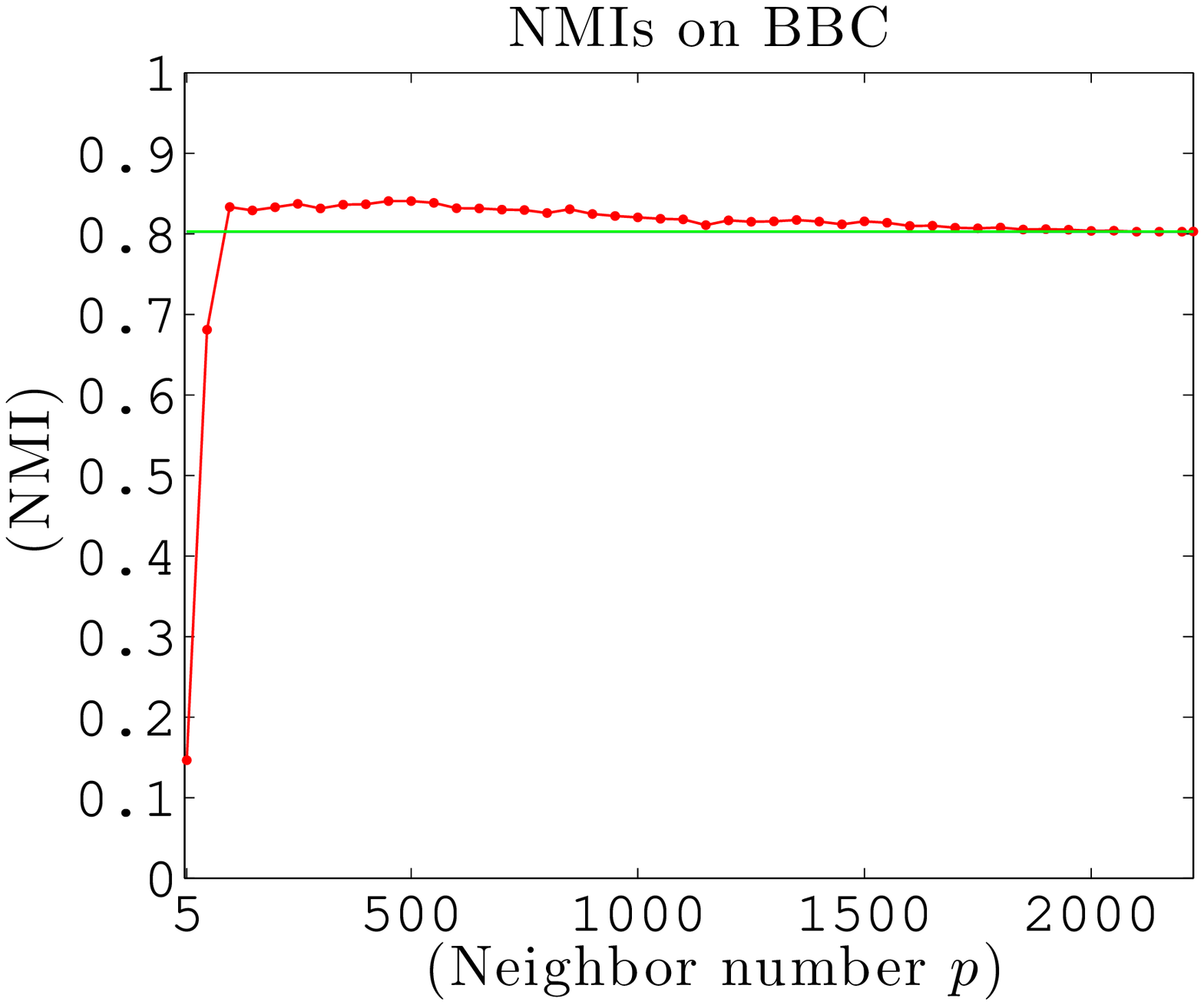}
\end{minipage}

 \caption{AC and NMI of NCER, MER and ER versus neighbor number $p$.
 The graphs from the top to bottom show the ACs and NMIs of the algorithms
 on COIL20, MNIST, ReutersTOP10, and BBC.
 The left graphs are the ACs,  and the right graphs are the NMIs.
 The horizontal axis is the neighbor number $p$, 
 and the vertical axis is the measured value of AC and NMI.
 The red points connected by the red line are for NCER.
 The blue and green lines are for MER and ER, respectively.}
 \label{Fig: Ex3}
\end{figure}

\begin{table}[h]
 \centering
 \caption{ACs and NMIs of NCER with $p=m$, MER, and ER on the data sets.}
 \label{Tab: Exp3}
\begin{tabular}{lccccccccc}
 \hline
 & \multicolumn{2}{c}{NCER with $p=m$} & & \multicolumn{2}{c}{MER} & & \multicolumn{2}{c}{ER} \\
 & AC & NMI & & AC & NMI & & AC & NMI \\
  \cline{2-3} \cline{5-6} \cline{8-9}
 COIL20 & 0.376 & 0.602 & & 0.376 & 0.602 & & 0.444 & 0.639 \\
 MNIST & 0.362 & 0.347 & & 0.362 & 0.347 & & 0.378 & 0.377 \\
 ReutersTOP10 & 0.790 & 0.641 & & 0.790 & 0.641 & & 0.791 & 0.647 \\
 BBC & 0.926 & 0.803 & & 0.926 & 0.803 & & 0.926 & 0.803 \\
 \hline
\end{tabular}
\end{table}

\section{Concluding Remarks} 
\label{Sec: Concluding remarks}

We developed the NCER algorithm for spectral clustering;
it is a variant of the normalized cut algorithm of Shi and Malik and Ng et al.
The similarity with the ER algorithm for a separable NMF was discussed.
In particular, if we modify one step of ER, 
the final outputs of NCER and the modified version of ER, 
called MER, coincide if we place assumptions on the data points and input parameters.
Experiments indicated that NCER is a stable clustering algorithm 
and has high performance.
They also showed how NCER behaves when the neighbor number $p$ was varied.
The results confirmed our theoretical insight that NCER is connected with MER
when $p$ is set to be equal to  the number of data points.

Finally, we should mention the issues which will be addressed in future research.
In the MVEE computation, 
we used a cutting plane framework to accelerate 
the efficiency of the interior-point algorithm.
Thanks to the hybrid of interior-point algorithm and cutting plane algorithm,
we could handle large problems.
However, there is no theoretical guarantee that the hybrid algorithm terminates 
after a finite number of iterations.
The experiments showed that it does not achieve the stopping criteria
under some parameter settings even after many iterations.
Therefore, we might want to consider alternative approaches for the computation.
For instance, the conditional gradient algorithm,
which is also referred to as the Frank-Wolfe algorithm, 
for the dual of MVEE formulation $\MVEEProb(\SC)$ is a promising approach;
\cite{Ahi08} reports encouraging experimental results.
The memory requirements of the algorithm are not so large, and thus, 
it should be able to work on large problems though it needs more iterations
than the interior-point algorithm does.

\bibliographystyle{plain}
\bibliography{main}

\end{document}